
\documentclass[10pt,twocolumn,letterpaper]{article}

\usepackage{iccv}              
\usepackage{amsthm}
\usepackage{amsmath}        
\usepackage{amssymb}        
\usepackage{graphicx}       
\usepackage{algorithm}
\usepackage{algorithmic}
\usepackage{pifont}
\usepackage{multirow}
\usepackage[accsupp]{axessibility}
\newcommand{\cmark}{\text{\ding{51}}}
\newcommand{\xmark}{\text{\ding{55}}}

\usepackage{subcaption}

\usepackage{colortbl}
\usepackage{tcolorbox} 
\definecolor{matching}{RGB}{230, 240, 255} 
\definecolor{others}{RGB}{255, 235, 235} 
\definecolor{woDP}{RGB}{235, 255, 235} 

\makeatletter
\renewcommand{\paragraph}{%
  \@startsection{paragraph}{4}{\z@}{1pt}{-5pt}{\normalfont\normalsize\bfseries}%
}
\makeatother

\makeatletter
\renewcommand\@makefntext[1]{%
    \noindent\makebox[0.75em][r]{\@thefnmark.}\hspace{0.5em}#1%
}
\makeatother

\makeatletter
\newtheorem*{rep@theorem}{\rep@title}
\newcommand{\newreptheorem}[2]{%
\newenvironment{rep#1}[1]{%
 \def\rep@title{#2 \ref{##1}}%
 \begin{rep@theorem}}%
 {\end{rep@theorem}}}
\makeatother

%
%



\usepackage{amsmath,amsfonts,bm}









\def\eqref#1{equation~\ref{#1}}









\def\1{\bm{1}}








\def\vmu{{\bm{\mu}}}

\def\vp{{\bm{p}}}

\def\vr{{\bm{r}}}

\def\vu{{\bm{u}}}
\def\vv{{\bm{v}}}

\def\vx{{\bm{x}}}
\def\vy{{\bm{y}}}
\def\vz{{\bm{z}}}

\def\vomega{{\bm{\omega}}}
\def\veta{{\bm{\eta}}}



\def\mP{{\bm{P}}}

\def\mZ{{\bm{Z}}}

\DeclareMathAlphabet{\mathsfit}{\encodingdefault}{\sfdefault}{m}{sl}
\SetMathAlphabet{\mathsfit}{bold}{\encodingdefault}{\sfdefault}{bx}{n}


\def\gB{{\mathcal{B}}}

\def\gD{{\mathcal{D}}}

\def\gF{{\mathcal{F}}}

\def\gM{{\mathcal{M}}}

\def\gS{{\mathcal{S}}}

\def\gV{{\mathcal{V}}}

\def\gX{{\mathcal{X}}}

\def\gZ{{\mathcal{Z}}}












%
\definecolor{iccvblue}{rgb}{0.21,0.49,0.74}
\usepackage[pagebackref,breaklinks,colorlinks,allcolors=iccvblue]{hyperref}


\title{Improving Noise Efficiency in Privacy-preserving Dataset Distillation}

\newtheorem{assumption}{Assumption}
\newtheorem{theorem}{Theorem}
\newtheorem{lemma}{Lemma}

\newreptheorem{theorem}{Theorem}

\author{Runkai Zheng\\
Carnegie Mellon University\\
runkaiz@andrew.cmu.edu\\
\and
Vishnu Asutosh Dasu\\
Pennsylvania State University\\
vdasu@psu.edu\\
\and
Yinong Oliver Wang\\
Carnegie Mellon University\\
yinongwa@cs.cmu.edu\\
\and
Haohan Wang\\
University of Illinois Urbana-Champaign\\
haohanw@illinois.edu\\
\and
Fernando De la Torre\\
Carnegie Mellon University\\
ftorre@cs.cmu.edu}

\begin{document}
\maketitle
\begin{abstract}
Modern machine learning models heavily rely on large datasets that often include sensitive and private information, raising serious privacy concerns. Differentially private (DP) data generation offers a solution by creating synthetic datasets that limit the leakage of private information within a predefined privacy budget; however, it requires a substantial amount of data to achieve performance comparable to models trained on the original data. To mitigate the significant expense incurred with synthetic data generation, Dataset Distillation (DD) stands out for its remarkable training and storage efficiency. This efficiency is particularly advantageous when integrated with DP mechanisms, curating compact yet informative synthetic datasets without compromising privacy. However, current state-of-the-art private DD methods suffer from a synchronized sampling-optimization process and the dependency on noisy training signals from randomly initialized networks. This results in the inefficient utilization of private information due to the addition of excessive noise. To address these issues, we introduce a novel framework that decouples sampling from optimization for better convergence and improves signal quality by mitigating the impact of DP noise through matching in an informative subspace. On CIFAR-10, our method achieves a \textbf{10.0\%} improvement with 50 images per class and \textbf{8.3\%} increase with just \textbf{one-fifth} the distilled set size of previous state-of-the-art methods, demonstrating significant potential to advance privacy-preserving DD. \footnote{Source code is available at \hyperlink{https://github.com/humansensinglab/Dosser}{https://github.com/humansensinglab/Dosser}.}

\end{abstract}    
\vspace{-9pt}
\section{Introduction}
\label{sec:intro}

\begin{figure}[ht]
    \centering
    \subfloat[]{%
        \includegraphics[width=0.48\linewidth]{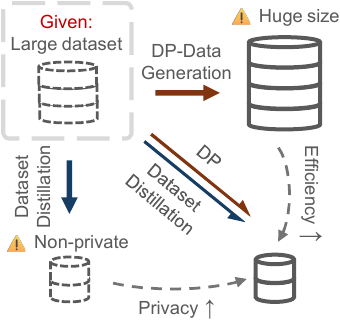}%
        \label{fig:intro-a}}%
    \hfill
    \subfloat[]{%
        \includegraphics[width=0.48\linewidth]{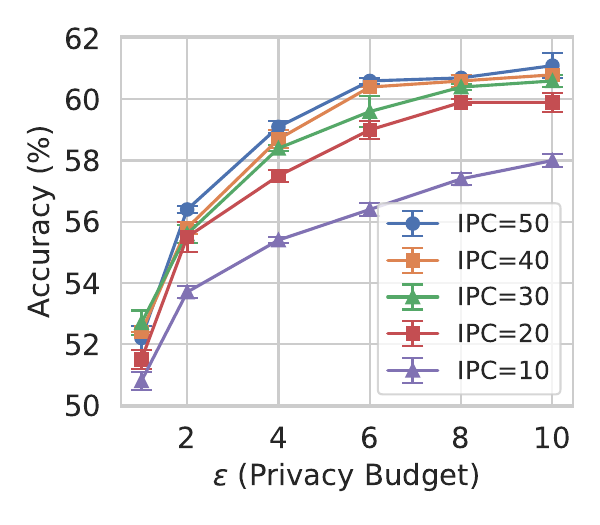}%
        \label{fig:intro-b}}
    \caption{
        (a) Overview of private dataset distillation. 
        (b) Accuracy of distilled CIFAR-10 images across privacy budgets and IPC.
    }
    \label{fig:intro}
\end{figure}

In modern machine learning, large datasets are essential for training robust and accurate models. However, they often contain sensitive information, posing challenges for data sharing and privacy protection. Differentially private (DP) data generation addresses this by producing synthetic datasets within a controlled privacy budget, typically using mechanisms like Differentially Private Stochastic Gradient Descent (DP-SGD)~\cite{abadi2016deep}. These approaches aim to approximate the data distribution and generate privacy‑preserving samples. Despite notable advances, balancing utility and privacy remains difficult. Moreover, generative models often require synthesizing and storing large volumes of data to match real‑data performance. For instance, \citet{ghalebikesabi2023differentially} generates data 20x larger than the original dataset yet still underperforms it. Such volume demands lead to high storage and computational costs during training.

Dataset Distillation (DD) \cite{wang2018dataset} has emerged as a promising alternative that addresses some of the inherent limitations of generative models. As fewer but highly informative synthetic samples can be retained for comparable downstream performance, DD ensures that models trained on distilled datasets perform similarly to those trained on larger original datasets with a significantly reduced storage.

While DD effectively minimizes dataset size and becomes visually anonymized, it does not inherently provide privacy guarantees. Conversely, differentially private data generation ensures privacy but with large synthetic dataset sizes, which may not be storage or computation-efficient. This presents a critical motivation for our work: to develop a method that integrates the compactness of dataset distillation with the stringent privacy guarantees of differential privacy (\cref{fig:intro}-a). Achieving this integration is challenging, as both privacy preservation and dataset compactness tend to negatively impact the utility of the dataset. 

Integrating dataset distillation with differential privacy, as in methods like PSG~\cite{chen2022private} and NDPDC~\cite{zheng2022differentially}, enables private data synthesis by matching training signals (e.g., gradients, features) from the original dataset. However, these matching-based approaches face key limitations. First, they couple sampling and optimization, requiring each optimization step to be paired with new noisy queries, leading to degraded signal utility. Second, they rely on randomly initialized networks to extract training signals, which often capture uninformative details and yield low signal-to-noise ratios (SNR), amplifying DP noise effects. As a result, existing methods struggle to fully exploit limited private signals, resulting in suboptimal distilled dataset performance.

To address the limitations of matching-based DD under DP constraints, we propose a framework combining Decoupled Optimization and Sampling (DOS) with Subspace-based Error Reduction (SER) to better exploit information from private data. DOS first samples a fixed number of training signals under a DP budget, then optimizes the synthetic dataset using these precomputed signals over a separate number of iterations. Decoupling these stages allows flexible trade-offs: fewer sampling steps reduce cumulative noise, while sufficient optimization improves image quality. SER further boosts utility by projecting signals into an informative subspace learned from auxiliary data, where DP noise is injected. This concentrates signal power on high-utility dimensions, increasing the signal-to-noise ratio and mitigating DP degradation. Together, DOS and SER enhance noise efficiency, enabling compact synthetic datasets that better balance privacy and utility.

\section{Background and Related Works}
\label{sec:background}

\subsection{Differential Privacy}
\label{subsec:differential_privacy}

Differential Privacy (DP) \cite{dp} is a rigorous mathematical framework that quantifies the privacy guarantees of algorithms operating on sensitive data. A randomized mechanism $\gM: \gB \to \mathcal{R}$ with domain $\gB$ and range $\mathcal{R}$ satisfies $(\epsilon, \delta)$-differential privacy if, for any two adjacent datasets $B, B' \in \gB$ differing in at most one element, and for any subset of outputs $S \subseteq \mathcal{R}$, the following inequality holds:
\[
\Pr[\gM(B) \in S] \leq e^\epsilon \Pr[\gM(B') \in S] + \delta.
\]
where $ \epsilon \geq 0 $ and $ \delta \geq 0 $ are parameters that measure the strength of the privacy guarantee: the smaller they are, the stronger the privacy.
We enforce DP using the Gaussian Mechanism (GM), which perturbs a function $f: B \to \mathbb{R}^d$ by adding noise:  
$\text{GM}_{\sigma}(B) = f(B) + \mathcal{N}(0, \sigma^2 \mathbb{I}_d)$.  
Under the add/remove-one model, the $\ell_2$‑sensitivity of $f$ is bounded, allowing calibrated noise addition.
To track cumulative privacy loss, we adopt Rényi Differential Privacy (RDP)~\cite{mironov2017renyi} with privacy amplification via subsampling~\cite{wang2019subsampled}, and convert the results to standard $(\epsilon,\delta)$‑DP using composition rules. Finally, we apply the post-processing property~\cite{dp} to ensure that any downstream operations preserve the established privacy budget.

\subsection{Dataset Distillation}
\label{subsec:dataset_distillation}

In the context of large datasets, dataset distillation aims to reduce the dataset size while retaining the critical information needed to train a model effectively. We denote a data sample by $\vx$ and its label by $\vy$, focusing on classification problems where $g_{\theta}(\cdot)$ represents a model parameterized by $\theta$, and $\ell(g_{\theta}(\vx), \vy)$ denotes the cross-entropy loss between the model output $g_{\theta}(\vx)$ and the label $\vy$. Let $\gD$ and $\gZ$ denote the original and synthetic datasets, respectively. Our objective is to find a smaller dataset $\gZ$ such that training on $\gZ$ yields similar performance as training on $\gD$. Formally, we define this dataset distillation problem as:
\begin{align*}
    \arg \min_{\gZ} \; \mathbb{E}_{(x, y) \sim \gD} \, \ell(g_{\theta(\gZ)}(x), y),& \\
    \text{where} \; \theta(\gZ) = \arg \min_{\theta} \; \mathbb{E}_{(x, y) \sim \gZ} \, \ell(g_{\theta}(x), y), \;& |\gZ| \ll |\gD|
\end{align*}

Following the taxonomy developed by~\citet{sachdeva2023datadistillationsurvey}, we discuss various previous methods of tackling this problem.
\textbf{Meta-Model Matching}
involves an inner optimization step to update model parameters $\theta$ and an outer optimization step to refine $\gZ$, aiming to make $\gZ$ as informative as possible for training $\theta$. \citet{wang2018dataset} use stochastic gradient descent (SGD) for the inner loop and Truncated Back-propagation Through Time (TBPTT) to optimize the outer loop by unrolling a fixed number of inner loop steps. 
\textbf{Gradient matching methods}
focus on matching the gradients of the neural network parameters when trained on synthetic data to those when trained on the original data. For instance, \citet{zhao2021dataset} proposes optimizing synthetic data such that the gradients of a model trained on this data mimic those from the original dataset, effectively capturing essential training dynamics in a condensed form. Extensions like \cite{zhao2021differentiable} incorporate differentiable data augmentation to enhance diversity and robustness. \textbf{Trajectory matching methods} extend this idea by matching the entire training trajectory of the model parameters. \citet{cazenavette2022dataset} match the sequence of model states during training (the trajectory) when trained on synthetic data to those from the real data, capturing a more comprehensive view of the learning process. This approach ensures that the condensed dataset leads to similar model behavior throughout training, not just in immediate gradients. Subsequent works build upon these concepts by integrating contrastive signals \cite{lee2022dataset}, aligning loss curvature \cite{shin2023loss}, and scaling up to larger datasets \cite{cui2023scaling}, among others.
\textbf{Distribution matching methods}
focus on aligning feature distributions between synthetic and real datasets. \citet{wang2022cafe} propose aligning features in a latent space to improve condensation, and subsequent works minimize statistical discrepancies using metrics like Maximum Mean Discrepancy \cite{zhang2024m3d} or exploit attention mechanisms for efficient distillation \cite{sajedi2023datadam}. \citet{liu2023dataset} introduces Wasserstein distance as an alternative metric of distribution discrepancy to build a distribution matching framework.
\textbf{Kernel-based distillation methods}
leverage theoretical insights from kernel ridge regression and infinitely wide networks to distill datasets; foundational works like \cite{nguyen2021dataset, nguyen2021datasetinfinitely} utilize kernel methods for condensation, while later studies improve efficiency and scalability through neural feature regression \cite{zhou2022dataset} and random feature approximations \cite{loo2022efficient}. Other works continue to refine these approaches by incorporating implicit gradients and convex optimization techniques \cite{du2023minimizing, loo2023dataset}.
These various methodologies reflect the diverse approaches employed in dataset distillation, each contributing unique perspectives and techniques.

\subsection{Differentially Private Dataset Distillation}
\label{subsec:differentially_private_dd}

The integration of dataset distillation (DD) with differential privacy (DP) has received considerable attention in recent literature. A recent technique known as DP-KIP \cite{vinaroz2024differentially} utilizes DP-SGD to update synthetic data within the Kernel-Induced Points (KIP) framework, offering an effective approach for distilling private datasets. Another well-developed direction involves incorporating DP within matching-based methods, where calibrated Gaussian noise is added to the matching signal before computing matching metrics. For example, Private Set Generation (PSG) \cite{chen2022private} introduces Gaussian noise into clipped gradients for matching, while Non-linear Differentially Private Dataset Condensation (NDPDC) \cite{zheng2022differentially} applies Gaussian noise to clipped features extracted from randomly initialized networks. By the post-processing theorem, these matching-based methods ensure differential privacy by aligning DP-protected signals from private datasets.


However, current matching-based DP-DD methods often couple the process of sampling signals from the private dataset with the process of optimizing the distilled images. We argue that this coupling leads to unnecessary noise addition. When sampling and optimization are performed simultaneously, their iterations are forced to be equal. When a high number of optimization iterations is required for convergence, an equally large number of sampling steps is needed. These numerous sampling steps require excessive noise to maintain DP. Consequently, the trade-off between iteration count and noise magnitude limits the effectiveness of these methods, as they struggle to maximize signal utility from the private dataset within a fixed privacy budget.
Moreover, due to restricted access to the private training dataset, matching-based methods rely on randomly initialized neural networks to extract training signals from the private data, instead of a pre-trained network. However, randomly initialized networks capture numerous uninformative details, which lowers the signal-to-noise ratio (SNR) of the training signals. This low SNR amplifies the negative impact of added noise, further compromising the utility of the training signals and the performance of the distilled dataset.

\section{Methodologies}

Our approach maximizes the utility of training signals from two perspectives: first, we decouple the sampling process from the optimization process, allowing for extended optimization iterations without unnecessary noise addition; second, we introduce an auxiliary dataset via generative models to identify the most informative signal subspace within randomly initialized neural networks, enhancing the signal-to-noise ratio (SNR)\footnote{Noise here in SNR refers to the uninformative features captured by randomly initialized neural networks, not the noise added for DP guarantees} to reduce the impact of added noise.

\subsection{Preliminaries and Annotations}


\paragraph{Private dataset} Let $\mathcal{D} = { (\gX^{(c)}, y^{(c)}) }_{c=1}^C$ denote the private dataset, where: $C$ is the total number of classes. $\gX^{(c)} = \{ \vx_j^{(c)} \}_{j=1}^{N^{(c)}}$ is the set of images belonging to class $c$. $y^{(c)}$ is the label associated with class $c$. $N^{(c)} = \left| \gX^{(c)} \right|$ is the number of images per class (IPC). 

\paragraph{Synthetic Dataset} Our goal is to generate a synthetic dataset $\gZ = \{ \gZ^{(c)} \}_{c=1}^C$, where $\gZ^{(c)} = \{ \vz_j^{(c)} \}_{j=1}^M$ represents the set of synthetic images for class $c$, and $M$ is the number of synthetic IPC.

\paragraph{Training Signal} We consider various types of training signals in matching-based methods, such as features in distribution-matching methods~\cite{zheng2022differentially} and gradients in gradient-matching methods~\cite{chen2022private}. Our framework can be generalized to any type. The extraction of the signal is represented by a parameterized function $f_{\theta}$, and the signal for matching is denoted $\vv$ and $\vu$ for real and synthetic datasets.

\begin{figure*}
    \centering
    \includegraphics[width=\linewidth]{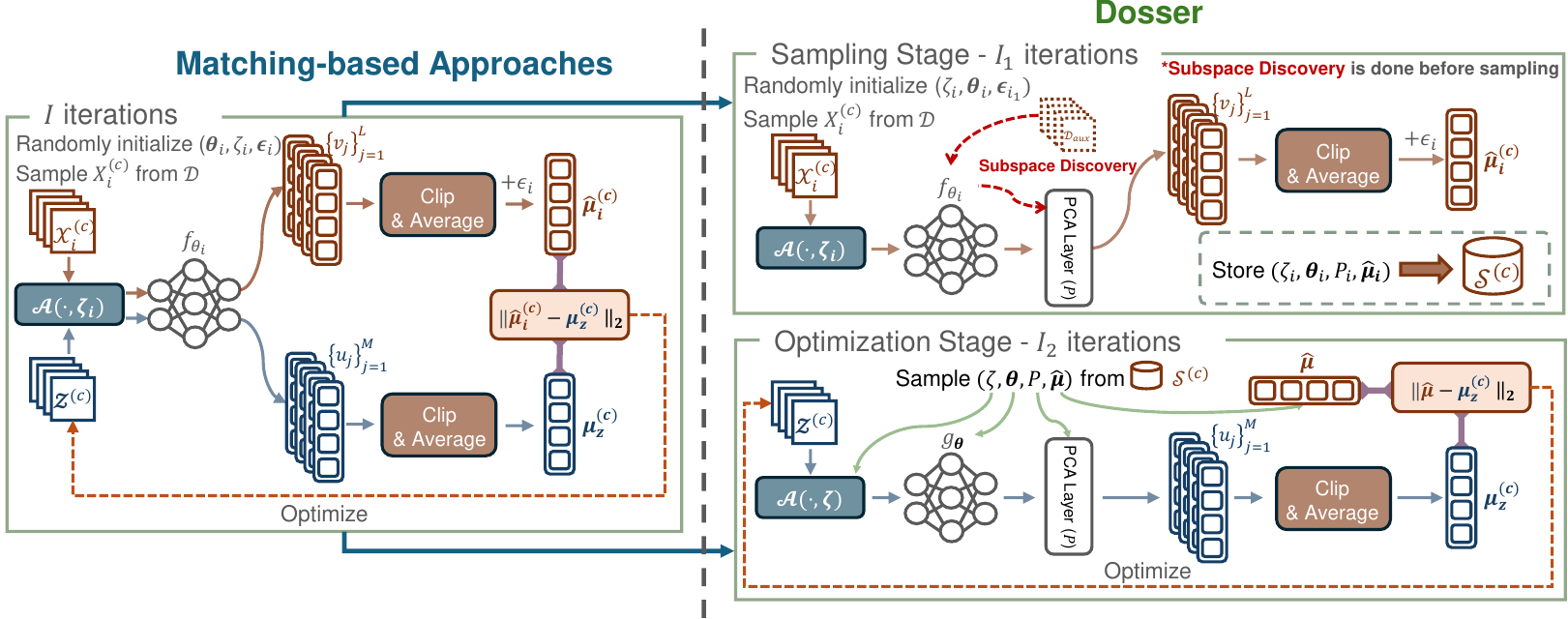}
    \caption{Overview of our proposed framework, which integrates Decoupled Optimization and Sampling (DOS) with Subspace Discovery for Error Reduction (SER).}
    \label{fig:method}
\end{figure*}

\subsection{Decoupled Optimization and Sampling (DOS)}
\label{sec:dos}


\subsubsection{Sampling Stage}
\label{sec:sampling}

In the sampling stage, for each class $c$, we perform the following steps to sample training signals at the $i^{th}$ iteration:

\begin{enumerate}[left=-12.5pt]
    \item \textbf{Data Sampling}: For each class $c$, sample a batch of images $\gX_i^{(c)} \sim \textsc{PoissonSample}\left( \gX^{(c)}, \frac{L}{N^{(c)}} \right)$ from the private dataset $\mathcal{D}^{(c)}$ using Poisson sampling with probability $L / N^{(c)}$, where $N^{(c)} = |\mathcal{D}^{(c)}|$ is the total number of images in the dataset, and $L$ represents the group size.

    \item \textbf{Signal Extraction}: 
    For each sampled $j^{th}$ image $\vx_{i,j}^{(c)}$ at the current $i^{th}$ iteration, apply a differentiable augmentation function $\mathcal{A}$ using a random seed $\zeta_i$, denoted as $\mathcal{A}_{\zeta_i}(\vx_{i,j}^{(c)})$. The augmentation function $\mathcal{A}$~\cite{zhao2021dataset} includes transformations such as random cropping, color saturations, and other techniques to enhance data diversity. 
    Next, extract training signals using a parameterized function $f_{\theta_i}$, where $f_{\theta_i}$ performs feature extraction in distribution-matching methods~\cite{zheng2022differentially} or gradient computation in gradient-matching methods~\cite{chen2022private}. The parameter set $\theta_i$ is reinitialized for each batch sampled, allowing $f_{\theta_i}$ to represent images across diverse signal spaces with randomly sampled parameters. 
    To satisfy differential privacy requirements, the extracted signals are clipped to limit their sensitivity by the clipping function $\text{clip}_K(\vv) = \vv \cdot \min\left(1, \frac{K}{\|\vv\|_2}\right).$ This ensures that the norm of $\vv$ does not exceed the threshold $K$. 
    The entire process of extracting the training signal $\vv_{i,j}$ from the sampled batch $\gX_i^{(c)}$ can be represented by the following function:
    \begin{equation}
    \vv_{i,j} = \mathcal{F}_{\zeta_i, \theta_i, K}(\vx_{i,j}^{(c)}) = \text{clip}_K \circ f_{\theta_i} \circ \mathcal{A}_{\zeta_i} \left(\vx_{i,j}^{(c)}\right).
    \label{eqn:F}
    \end{equation}
    \item \textbf{Aggregation and Noise Addition}: To ensure differential privacy, compute the aggregated signal and then add Gaussian noise:
    \begin{equation}
    \hat{\vmu}_i^{(c)} = \vmu_i^{(c)} +\mathbf{\veta}_i,
    \end{equation}
    where the aggregated signal $\vmu_i^{(c)}$ is defined as
    \begin{equation}
    \vmu_i^{(c)} = \frac{1}{L} \sum_{\vx_{i,j}^{(c)} \in \gX_i^{(c)}} \mathcal{F}_{\zeta_i, \theta_i, K}(\vx_{i,j}^{(c)}),
    \end{equation}
    and $\mathbf{\veta}_i \sim \mathcal{N}(0, \sigma^2 \mathbf{I})$ represents the Gaussian noise added for privacy. The noise scale $\sigma$ is determined based on the desired privacy budget $(\epsilon, \delta)$, and the calculation of $\sigma$ follows the process introduced by \citet{zheng2022differentially} and through the Opacus library~\cite{yousefpour2021opacus}.

\end{enumerate}


Following these steps, after $I_1$ iterations, we obtain a dataset of DP-protected training signals denoted by $\gS = \{\gS^{(c)}\}_{c=1}^{C}$ where $C$ is the number of classes. Each subset $\gS^{(c)} = \{(\hat{\boldsymbol{\mu}}_i, \zeta_i, \theta_i)\}_{i=1}^{I_1}$ contains $I_1$ tuples, with each tuple consisting of:

\begin{itemize}
    \item $\hat{\vmu}_i$: the noisy aggregated training signal,
    \item $\zeta_i$: the random seed used for data augmentation,
    \item $\theta_i$: the sampled model parameters at iteration $i$.
\end{itemize}

According to the post-processing theorem \cite{dp}, any operation on the DP-protected signal set that is independent of the private data does not incur additional privacy costs. Thus, we can repeatedly use the protected signal set for subsequent optimization steps.

\subsubsection{Optimization Stage}

In the optimization stage, we utilize the stored training signals to optimize a synthetic dataset $\gZ = \{\gZ^{(c)}\}_{c=1}^{C}$ for $I_2$ iterations, which is initialized with random Gaussian noise.
For each optimization iteration and each class $c$, the following steps are performed:

\begin{enumerate}[left=-12.5pt]
    \item \textbf{Signal Retrieval}: Randomly select $(\hat{\boldsymbol{\mu}}_i^{(c)}, \zeta_i, \theta_i) \in \gS^{(c)}$.

    \item \textbf{Synthetic Signal Computation}: Apply $\mathcal{F}$ in \cref{eqn:F} to the synthetic images with the stored random seed $\zeta_i$:
    \begin{equation}
    \vmu_{\mZ}^{(c)} = \frac{1}{M} \sum_{\mZ_j \in \gZ^{(c)}} \mathcal{F}_{\zeta_i, \theta_i, K}(\mZ_j).
    \end{equation}

    \item \textbf{Loss Calculation}: Compute the squared $\ell_2$ distance between the synthetic and noisy real aggregated signals:
    \begin{equation}
    \mathcal{L}^{(c)} = \left\| \hat{\vmu}_i^{(c)} - \boldsymbol{\mu}_{\mZ}^{(c)} \right\|_2^2.
    \end{equation}

    \item \textbf{Parameter Update}: Update the synthetic images $\gZ^{(c)}$ by performing gradient descent on the loss $\mathcal{L}^{(c)}$.
\end{enumerate}

By decoupling sampling and optimization, we can assign different numbers of iterations $I_1$ and $I_2$ to each process separately. This allows the optimization to converge better through longer iterations without introducing extra DP noise to the sampled signals.

\subsection{Subspace discovery for Error Reduction (SER)}


Improving the SNR in the raw extracted signal is another effective way to reduce the impact of later added DP noise. To improve the SNR, we introduce Subspace Discovery for Error Reduction (SER). SER leverages generative models to create auxiliary images that mimic the private dataset, enabling the identification of an informative subspace within a randomly initialized neural network. By projecting the signals onto the subspace, we effectively reduce the amount of noise captured by random neural networks, thereby enhancing the SNR and reducing the impact of the DP noise.

\paragraph{Theoretical Insights}
To understand the benefits of subspace projection in the context of differential privacy, we analyze the mean squared error (MSE) in estimating the true mean signal $\vmu$ with or without projection where the true mean of the signal $\vmu=\mathbb{E}_{\vx_j^{(c)}\in\gX^{(c)}}[\mathcal{F}(\vx_j^{(c)})]$. To perform this comparison, we start with the following basic assumption about the signal vector:

\begin{assumption}
    Each signal vector $\vv_j$, obtained by transforming a randomly sampled real data point $\vx_j$ using the function $\mathcal{F}$, can be modeled as:
    \begin{equation}
    \vv_j = \vmu + \vp_j + \vr_j, \quad \|\vv_j\|_2 \leq K,
    \end{equation}
    where $\vmu \in \mathbb{R}^D$ is the true mean signal vector, $\vp_i \in \mathbb{R}^D$ represents the \textbf{informative signal} with zero mean and covariance matrix $\Sigma_p$ (of rank $d$), and $\vr_i \in \mathbb{R}^D$ denotes the \textbf{uninformative signal} with zero mean and covariance $\Sigma_r$.
\end{assumption}

The differentially private noisy mean in the original space is calculated as
$
\hat{\vmu}_{\text{orig}} = \frac{1}{L} \sum_{\vv_i \in \gS} \vv_i + \mathbf{\mathbf{\veta}}_{\text{orig}},
$
where $\mathbf{\mathbf{\veta}}_{\text{orig}} \sim \mathcal{N}(0, \sigma_{\text{orig}}^2 \mathbb{I}_D)$ is Gaussian noise added to satisfy a given differential privacy budget $(\epsilon, \delta)$. Then, the noisy mean in the projected space is obtained by
$
\hat{\vmu}_{\text{proj}} = \frac{1}{L} \sum_{\vv_i \in \gS} \mP^\top \vv_i + \mathbf{\mathbf{\veta}}_{\text{proj}},
$
where $\mathbf{\mathbf{\veta}}_{\text{proj}} \sim \mathcal{N}(0, \sigma_{\text{proj}}^2 \mathbb{I}_d)$ is the Gaussian noise added in the projected space, with the same budget $(\epsilon, \delta)$. The noisy mean can then be reconstructed back to the original space via
$
\hat{\vmu}_{\text{back}} = \mP \hat{\vmu}_{\text{proj}}.
$
Under these conditions, the MSE in estimating the true mean $\vmu$ with and without projection can be defined as:

\begin{figure}
    \centering
    \includegraphics[width=\linewidth]{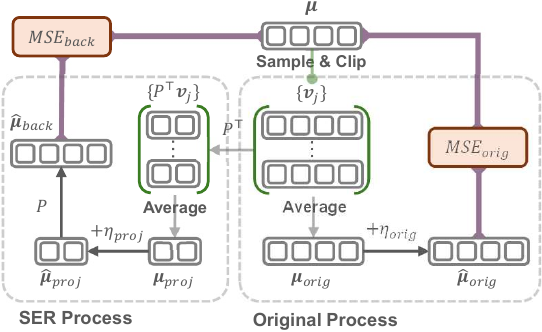}
    \caption{Illustration of Subspace discovery for Error Reduction (SER). We aim to achieve lower MSE within the differentially private framework by projecting training signals onto an informative subspace.}
    \label{fig:ser_mse}
\end{figure}

\begin{equation}
\left\{
\begin{array}{l}
\text{MSE}_{\text{orig}} = \mathbb{E}\left[ \left\| \hat{\vmu}_{\text{orig}} - \vmu \right\|_2^2 \right], \\ [5pt]
\text{MSE}_{\text{back}} = \mathbb{E}\left[ \left\| \hat{\vmu}_{\text{back}} - \vmu \right\|_2^2 \right].
\end{array}
\right.
\end{equation}

We formulate the difference between the two MSEs (as shown in \cref{fig:ser_mse}) with the following theorem:

\begin{reptheorem}{thm:ser}
    \label{thm:ser}
    Under the same budget of differential privacy $(\epsilon, \delta)$, the difference of MSE with and without projection $\mP$ in estimating the true mean $\vmu$ can be decomposed into the three terms:
    \begin{align*}
    &\text{MSE}_{\text{orig}} - \text{MSE}_{\text{back}} = 
    \underbrace{\frac{1}{L} \text{Tr}\left( (I - \mP \mP^\top) \Sigma_r \right)}_{\text{Projection Residual}} \\ 
    &+ \underbrace{\sigma_{\text{proj}}^2 \left( \frac{\max_j\|\vv_j\|_2^2}{\max_j\|\mP^\top \vv_j\|_2^2} D - d \right)}_{\text{Dimensional Reduction Effect}} \\
    &- \underbrace{\| (I - \mP \mP^\top) \vmu \|_2^2 + \frac{1}{L} \text{Tr}\left( (I - \mP \mP^\top) \Sigma_p \right)}_{\text{Projection Error}}.
    \end{align*}
\end{reptheorem}

Please refer to \cref{sec:theory} for the complete proof. We analyze the three terms separately as follows:





\paragraph{Projection Residual:} This term captures the variance in the uninformative signal excluded by the subspace $\mP$, quantifying the components discarded during projection.

\paragraph{Dimensional Reduction Effect:} This expression indicates that the reduction in MSE benefits from (1) the norm reduction after projection, given by $\frac{\|\vv_j\|_2^2}{\|\mP^\top \vv_j\|_2^2}$, and (2) the dimensionality reduction from $D$ to $d$.

\paragraph{Projection Error:} This term measures the error from projecting the true mean $\vmu$ and the informative signal variance $\Sigma_p$ into $\mP$. It depends on the subspace dimension $d$ and the alignment between the auxiliary and private datasets. A well-chosen subspace minimizes this error while preserving noise reduction benefits.

In summary, the theorem shows that dimensionality reduction minimizes error in estimating the true mean by (1) discarding uninformative variance in $\mP$ and (2) leveraging norm and dimensionality reduction post-projection. However, projection error introduces a trade-off, potentially impacting performance due to reduced dimensions and dataset discrepancies. To address this, we propose two methods for creating an auxiliary dataset $\gD_{\text{aux}}$:

\textbf{Leveraging Pre-trained Models:} We use a pre-trained foundation model such as Stable Diffusion (SD)~\cite{rombach2022high} to generate images for each category. This approach is particularly effective when the target dataset's distribution closely aligns with the pre-trained model's distribution. Since it does not involve the private dataset, it also incurs no additional privacy cost. One would argue that if we already have a generative model that can produce images for a specific class, a direct approach is to generate images for distillation. The key advantage of our method is that it ensures the distilled images we generate align well with the distribution of the target dataset, capturing its unique characteristics more accurately while using images from generative models could result in distribution discrepancy.

\textbf{Using Differentially Private Generative Models:} Given a total privacy budget $(\epsilon, \delta)$, we allocate a portion $(\epsilon_1, \delta_1)$ to train a generative model, such as a differentially private diffusion model (DPDM)~\cite{dockhorn2022differentially}, on $\gD$. We then perform SER using a generated dataset by the trained model and proceed with dataset distillation under the remaining privacy budget $(\epsilon_2, \delta_2)$, ensuring that $\epsilon_1 + \epsilon_2 = \epsilon$ and $\delta_1 + \delta_2 = \delta$. This method maintains the overall privacy budget, as formalized in the following theorem:

\begin{reptheorem}{thm:composition}
    \label{thm:composition}
    The process of distilling the private dataset $\mathcal{D}$ with an $(\epsilon_1, \delta_1)$-DP mechanism, supported by SER with an auxiliary dataset $\mathcal{D}_{\text{aux}}$ satisfying $(\epsilon_2, \delta_2)$-DP to $\mathcal{D}$, achieves $(\epsilon_1 + \epsilon_2, \delta_1 + \delta_2)$-DP to $\mathcal{D}$.
\end{reptheorem}

This theorem applies the basic composition theorem (see \cref{sec:theory} for the proof). Training a generative model on the private dataset requires an additional privacy cost. However, it is useful when the target domain is specialized, such as medical imaging or other niche fields not well-represented by foundational generative models typically trained on natural images.

\subsection{Overall Framework}
Our framework that combines DOS and SER for differentially private dataset distillation, named \textbf{Dosser}, is illustrated in \cref{fig:method}.
In the sampling stage for class $c$, for each iteration $i$, we initialize random neural networks and identify informative signal subspaces through PCA on the auxiliary data, giving the projection $\mP$. We then sample private data batch $\gX^{(c)}_i$ and obtain the training signal by $\hat{\mu}_i^{(c)} = \frac{1}{L}\sum_jP^\top \gF(\vx^{(c)}_{i,j})+\veta_i$. We store $\mP$ at each iteration $i$ along with the data tuples, forming sets $(\hat{\vmu}_i, \zeta_i, \theta_i, \mP_i)$ in $\gS$.
During the Optimization Stage, we iteratively update the synthetic dataset by aligning it with the stored noisy training signals within the identified subspaces by:
$$\mathcal{L}^{(c)} = \left\| \hat{\vmu}_i^{(c)} - \mP_i^\top\vmu_{\mZ}^{(c)} \right\|_2^2,$$
where $\vmu_{\mZ}^{(c)}=\frac{1}{M}\sum_{\vz^{(c)}\in\gZ^{(c)}}P^\top \gF(\vz^{(c)})$ is the averaged signal from the synthetic dataset in the subspace. 
This process leverages decoupled sampling to allow extensive optimization without additional privacy costs, while subspace discovery ensures that synthetic data captures the most relevant information from the original data.

\section{Experiments}

\subsection{Experimental Settings}
\paragraph{Dataset}
For empirical evaluation, we use the MNIST~\cite{deng2012mnist}, FashionMNIST~\cite{xiao2017/online}, and CIFAR-10~\cite{coates2011analysis} datasets. MNIST contains 70,000 $28 \times 28$ grayscale images of handwritten digits (0-9), with 60,000 for training and 10,000 for testing. FashionMNIST, a more challenging variant, includes 60,000 $28 \times 28$ grayscale images across 10 fashion categories, split into 50,000 training and 10,000 testing images. CIFAR-10 consists of 60,000 $32 \times 32$ color images across 10 classes, with 50,000 for training and 10,000 for testing.


\paragraph{Methods}
We evaluate the effectiveness of our method in comparison with several state-of-the-art differentially private data distillation approaches under a strict privacy budget of $(\varepsilon = 1, \delta = 10^{-5})$. The methods we compare include DP-Sinkhorn~\cite{cao2021don}, DP-MERF~\cite{harder2021dp}, PSG~\cite{chen2022private}, DP-KIP-ScatterNet~\cite{vinaroz2024differentially}, and NDPDC~\cite{zheng2022differentially}. As a baseline, we also compare the above-mentioned methods with standard distribution matching without differential privacy, noted as DM w/o DP, to better understand the performance gap with and without differential privacy. We implement Dosser with distribution matching, based on the framework established by \citet{zheng2022differentially}. In our method, unless otherwise specified, we set the sampling iteration to 10,000, the optimization iteration to 200,000, and the privacy budget to $(1, 10^{-5})$. We also adopted Partitioning and Expansion Augmentation (PEA) from Improved Distribution Matching~\cite{zhao2023improved}, which is a technique to enhance dataset distillation by splitting and enlarging synthetic images. For SER, on MNIST and FashionMNIST, we construct an auxiliary dataset by training a differentially private diffusion model (DPDM)~\cite{dockhorn2022differentially} on the private dataset with a privacy budget of $(0.2\epsilon, 0.2\delta)$, followed by dataset distillation with $(0.8\epsilon, 0.8\delta)$, ensuring an overall privacy budget of $(\epsilon, \delta)$ as outlined in \Cref{thm:composition}. We determine the privacy‑budget allocation empirically, selecting the split that yields the highest validation accuracy. The results of using other DP generators can be found in \Cref{sec:other_models}. For CIFAR-10, we construct the auxiliary dataset directly using SD-v1-4~\cite{rombach2022high}. We set the subspace dimension to 500 and the auxiliary dataset size to 1000; additional details are in \Cref{sec:settings}.


\subsection{Evaluation Against Baselines}

We compare the accuracies of various methods on MNIST, FashionMNIST, and CIFAR-10 under $(1, 10^{-5})$-DP with IPC of 10 and 50. The results are shown in \Cref{tab:main}. 
DM without differential privacy, highlighted in green, achieves the highest accuracy across datasets, showing the upper-performance limit without the added privacy constraints. Matching-based methods are highlighted in blue rows. Among them, NDPDC, which is derived from DM, exhibits noticeable accuracy degradation due to the addition of DP noise. This comparison directly highlights the impact of privacy noise on model performance.
Our method, Dosser, builds upon NDPDC by enhancing the matching process with DOS and SER, improving its accuracy under the same privacy constraints. These additions increase the utility of the training signal, allowing Dosser to achieve higher accuracy than NDPDC. Specifically, Dosser provides an average improvement of $1.6\%$ on MNIST, $2.1\%$ on FashionMNIST, and $10.6\%$ on CIFAR-10, with more substantial gains observed on the more complex datasets. Notably, Dosser exhibits a much smaller accuracy gap to the original DM without differential privacy. This difference is especially apparent on CIFAR-10 with IPC=10, where Dosser’s performance gap from DM w/o DP is only $1.5\%$, compared to NDPDC’s $11.7\%$ gap. In general, Dosser's strong performance in datasets, especially with close accuracy with DM w/o DP, demonstrates its superior ability to mitigate the effects of noise within the differential privacy framework.

\begin{table*}[t]
    \centering
    \scriptsize
    \resizebox{\linewidth}{!}
    {
    \begin{tabular}{l|cc|cc|cc}
        \toprule
         & \multicolumn{2}{c}{\textbf{MNIST}} & \multicolumn{2}{c}{\textbf{FashionMNIST}} & \multicolumn{2}{c}{\textbf{CIFAR-10}}\\
         \midrule
         Method & IPC=$10$ & IPC=$50$ & IPC=$10$ & IPC=$50$ & IPC=$10$ & IPC=$50$ \\
         \midrule
         \rowcolor{woDP}
         DM w/o DP & $97.8$ & $99.2$ & $84.6$ & $88.7$ & $52.1$ & $60.6$ \\
         DP-Sinkhorn~\cite{cao2021don} & $31.7\pm3.2$ & $33.9\pm1.7$ & $9.8\pm0.0$ & $22.0\pm0.1$ & $ - $ & $ - $ \\
         DP-MERF~\cite{harder2021dp} & $75.0\pm0.3$ & $84.4\pm2.3$ & $65.5\pm3.2$ & $71.3\pm1.7$ & $ - $ & $ - $ \\
        
         DP-KIP-ScatterNet~\cite{vinaroz2024differentially}& $25.8\pm2.1$ & $13.8\pm2.6$ & $17.7\pm1.5$ & $16.2\pm1.2$ & $16.8\pm1.1$ & $9.5\pm0.5$ \\
         
         \rowcolor{matching}
         PSG~\cite{chen2022private} & $78.6\pm0.7$& $-$ & $68.5\pm0.5$& $-$& $33.6\pm0.3$ & $-$\\
         
         \rowcolor{matching}
         NDPDC~\cite{zheng2022differentially}& $93.1\pm0.4$ & $94.1\pm0.4$ & $77.7\pm0.6$ & $78.8\pm0.4$ & $39.4\pm0.8$ & $42.3\pm0.8$ \\
         
         \rowcolor{matching}
         \textbf{Dosser (ours)}& $ 95.3\pm0.0 $ & $96.4\pm0.0$ & $\mathbf{81.6}\pm0.1$ & $81.8\pm0.2$ & $44.2\pm0.2$ & $49.1\pm0.5$ \\
         
         \rowcolor{matching}
         \textbf{Dosser (ours)  w/ PEA~\cite{zhao2023improved}}& $ \mathbf{96.4}\pm0.0 $ & $\mathbf{96.7}\pm0.1$ & $80.1\pm0.5$ & $\mathbf{83.1}\pm0.5$ & $\mathbf{50.6}\pm0.1$ & $\mathbf{52.3}\pm0.6$ \\
         
         \bottomrule
    \end{tabular}
    }
    \caption{The table presents a comparison of accuracies achieved by various methods on three datasets: MNIST, FashionMNIST, and CIFAR-10, evaluated under a privacy budget of $(1, 10^{-5})$. Each method’s performance is reported for IPC of 10 and 50.}
    \label{tab:main}
\end{table*}

\subsection{Ablation Studies}
In this section, we conduct ablation studies on three key aspects: 1) evaluating the performance gain contributed by each of the proposed modules, 2) examining the effect of increasing the number of training iterations to show how DOS improves performance through additional optimization steps, and 3) analyzing the impact of varying the dimensionality of the projected subspace in SER, as well as the amount of auxiliary data used in SER.

\subsubsection{Ablating Contributions of DOS and SER}

In this study, we evaluate the individual contributions of DOS and SER, the results on CIFAR-10 are shown in \Cref{tab:ablation}. When applying DOS alone, accuracy improves by an average of $4.3\%$ compared to the baseline without DOS and SER, demonstrating the advantage of additional optimization steps. Applying SER alone also enhances results due to the increased signal-to-noise ratio introduced by SER; however, the improvement is more modest at around $1.9\%$. The improvement is limited when the noise required is small ($\epsilon=10$). When both DOS and SER are applied together, the combined benefits are clear: SER effectively capitalizes on the additional optimization steps provided by DOS, resulting in a $5.7\%$ improvement over the baseline. This indicates that while SER enhances the signal-to-noise ratio, additional training iterations are essential for achieving optimal convergence. DOS and SER complement each other by enhancing the utility of training signals from two different aspects, and achieve better performance when combined.

\begin{table}[t]
    \centering
    \resizebox{\linewidth}{!}
    {
    \begin{tabular}{cc|cc|cc}
        \toprule
         \multicolumn{2}{c|}{Dosser Components} & \multicolumn{4}{c}{(IPC, $\epsilon$)} \\
         \midrule
         DOS & SER & $(10, 1)$ & $(50, 1)$ & $(10, 10)$ & $(50, 10)$ \\
         \midrule
         
         \xmark & \xmark & $41.7\pm0.0$ & $45.7\pm0.1$ & $54.1\pm0.0$ & $57.7\pm0.0$ \\
         
         \cmark & \xmark & $47.7\pm0.1$ & $51.0\pm0.2$ & $56.7\pm0.5$ & $\mathbf{61.1}\pm0.1$ \\
         
         \xmark & \cmark & $46.5\pm0.2$ & $47.8\pm0.3$ & $54.7\pm0.5$ & $57.6\pm0.0$ \\
         
         \cmark & \cmark & $\mathbf{50.6}\pm0.1$ & $\mathbf{52.3}\pm0.3$ & $\mathbf{58.0}\pm0.2$ & $61.0\pm0.0$ \\

         \bottomrule
    \end{tabular}
    }
    \caption{Performance improvements from individual and combined contributions of DOS and SER components under varying IPC and privacy settings on CIFAR-10.}
    \label{tab:ablation}
\end{table}

\begin{figure}[t!]
    \centering
    \begin{subfigure}{0.5\linewidth}
        \centering
        \includegraphics[width=\linewidth]{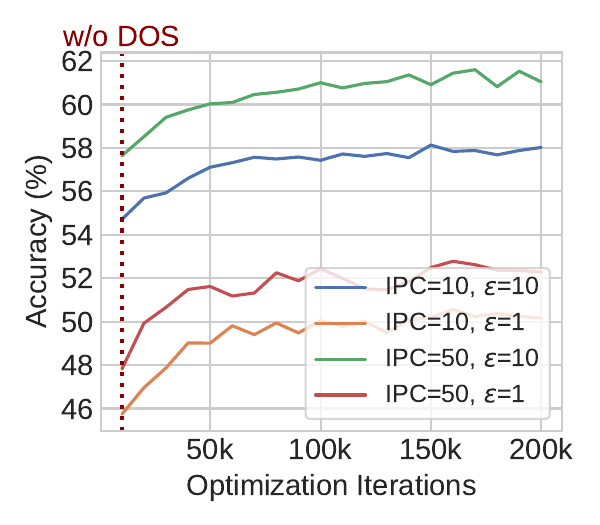}
        \caption{Varying optimization iterations \\ in DOS.}
        \label{fig:dos}
    \end{subfigure}%
    \hfill
    \begin{subfigure}{0.5\linewidth}
        \centering
        \includegraphics[width=\linewidth]{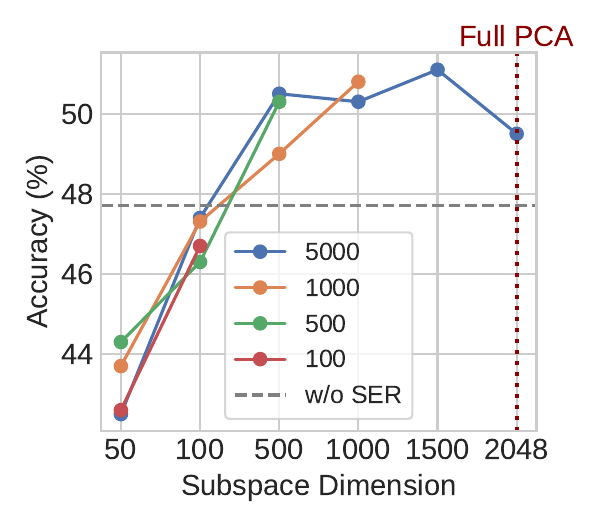}
        \caption{Varying subspace dimensions and auxiliary dataset sizes in SER.}
        \label{fig:ser}
    \end{subfigure}
    \caption{Ablation studies on CIFAR-10 with ConvNet.}
    \label{fig:combined_figure}
\end{figure}

\subsubsection{Impact of DOS Hyperparameters}
To investigate the impact of hyperparameters in DOS, specifically the number of optimization iterations, we analyze the accuracy changes during optimization in four settings varying IPC and $\epsilon$, while keeping $\delta=10^{-5}$. The results on CIFAR-10 are presented in \Cref{fig:dos}. 
In each setting, the sampling iteration count is set to 10,000, so the leftmost point (where the optimization iteration equals 10,000) can be regarded as the scenario without DOS. From the figure, we observe that without DOS, the optimization process of synthetic images does not fully converge, resulting in low test accuracy for downstream tasks. As the optimization iterations increase, accuracy gradually improves and ultimately reaches a much higher level than without DOS, demonstrating the necessity of decoupling sampling and optimization.
The effect of DOS is particularly pronounced with higher IPC values; the accuracy gap between the initial and final evaluations is greater when IPC$=50$ than when IPC$=10$, regardless of whether $\epsilon=1$ or $\epsilon=10$.

\subsubsection{Impact of SER Hyperparameters}
To examine how subspace dimensionality and the size of the auxiliary dataset in SER influence matching efficiency, we analyze downstream accuracy on CIFAR-10 across various settings, as illustrated in \Cref{fig:ser}. The ConvNet feature dimension is 2048. Our results indicate minimal impact from varying the auxiliary dataset size, likely because the variance in PCA parameters across different auxiliary dataset sizes is small and thus has little effect on matching performance. The only benefit of increasing dataset size is to increase the maximum dimension we can project with PCA.
A notable observation is the substantial accuracy gap between using the full 2048-dimensional feature space and the results without SER. When the reduced dimensionality is set to 2048, PCA effectively performs as a single linear transformation that concentrates high-variance components in the top dimensions. This suggests that PCA’s benefits are not solely due to error reduction via dimensionality reduction but also from emphasizing high-variance components. Since the Gaussian noise is uniformly distributed across all dimensions, concentrating high-variance signals into fewer components enhances the signal-to-noise ratio in those dimensions, thereby improving matching efficiency.
We conducted additional quantitative experiments to assess the direct impact of our method to the MSE estimation via noise reduction; see \Cref{sec:more_quantitative_analysis} for details.
\section{Limitations}
A limitation of our method is that it is specifically designed for matching training signals from randomly initialized networks, which is less competitive. Recent advances in DD, especially those scaling to larger datasets like ImageNet~\cite{krizhevsky2012imagenet}, often require pre-trained models for extracting matching signal or based on trajectory matching etc., which have not yet been adapted to DP-constrained scenarios. In future work, we aim to explore ways to adapt our approach to more advanced matching-based DD techniques or other state-of-the-art DD methods. Another limitation lies in SER, which requires an auxiliary dataset that closely matches the distribution of the training data. For natural image datasets or large datasets, we can create this auxiliary dataset using foundational generative models or by training a generative model with DP on the large dataset. However, for specialized domain datasets with limited data, the performance of SER may be constrained.

\section{Conclusion}

In this paper, we introduced a novel framework for differentially private dataset distillation that combines two key innovations: decoupling the sampling and optimization processes and applying subspace projection to improve signal utility. Our approach addresses the limitations of existing matching-based methods by enabling independent control over sampling and optimization iterations, which reduces cumulative noise injection and allows for more efficient utilization of the privacy budget. Additionally, our use of subspace projection identifies and focuses on the most informative signal subspace, effectively increasing the signal-to-noise ratio within each training signal.
Experimental results across multiple datasets validate that our framework achieves superior accuracy and privacy efficiency compared to traditional methods. Our method offers a substantial improvement in differentially private dataset distillation, setting a new standard for privacy-preserving data synthesis.


\section*{Acknowledgements}
We would like to thank Nicholas Apostoloff, Oncel Tuzel, and Jerremy Holland for their insightful comments and constructive feedback throughout the development of this work. Their suggestions helped clarify key ideas and strengthen both the methodology and presentation. 
\newpage

{
    \small
    \bibliographystyle{ieeenat_fullname}
    \bibliography{main}

\begin{thebibliography}{40}
\providecommand{\natexlab}[1]{#1}
\providecommand{\url}[1]{\texttt{#1}}
\expandafter\ifx\csname urlstyle\endcsname\relax
  \providecommand{\doi}[1]{doi: #1}\else
  \providecommand{\doi}{doi: \begingroup \urlstyle{rm}\Url}\fi

\bibitem[Abadi et~al.(2016)Abadi, Chu, Goodfellow, McMahan, Mironov, Talwar,
  and Zhang]{abadi2016deep}
Martin Abadi, Andy Chu, Ian Goodfellow, H~Brendan McMahan, Ilya Mironov, Kunal
  Talwar, and Li Zhang.
\newblock Deep learning with differential privacy.
\newblock In \emph{Proceedings of the 2016 ACM SIGSAC conference on computer
  and communications security}, pages 308--318, 2016.

\bibitem[Cao et~al.(2021)Cao, Bie, Vahdat, Fidler, and Kreis]{cao2021don}
Tianshi Cao, Alex Bie, Arash Vahdat, Sanja Fidler, and Karsten Kreis.
\newblock Don’t generate me: Training differentially private generative
  models with sinkhorn divergence.
\newblock \emph{Advances in Neural Information Processing Systems},
  34:\penalty0 12480--12492, 2021.

\bibitem[Cazenavette(2022)]{cazenavette2022dataset}
George et~al. Cazenavette.
\newblock Dataset distillation by matching training trajectories.
\newblock In \emph{CVPR}, 2022.

\bibitem[Chen et~al.(2022)Chen, Kerkouche, and Fritz]{chen2022private}
Dingfan Chen, Raouf Kerkouche, and Mario Fritz.
\newblock Private set generation with discriminative information.
\newblock \emph{Advances in Neural Information Processing Systems},
  35:\penalty0 14678--14690, 2022.

\bibitem[Coates et~al.(2011)Coates, Ng, and Lee]{coates2011analysis}
Adam Coates, Andrew Ng, and Honglak Lee.
\newblock An analysis of single-layer networks in unsupervised feature
  learning.
\newblock In \emph{AISTATS}, pages 215--223. JMLR, 2011.

\bibitem[Cui(2023)]{cui2023scaling}
Justin et~al. Cui.
\newblock Scaling up dataset distillation to imagenet-1k with constant memory.
\newblock In \emph{ICML}, 2023.

\bibitem[Deng(2012)]{deng2012mnist}
Li Deng.
\newblock The mnist database of handwritten digit images for machine learning
  research.
\newblock \emph{IEEE Signal Processing Magazine}, 29\penalty0 (6):\penalty0
  141--142, 2012.

\bibitem[Dockhorn et~al.(2022)Dockhorn, Cao, Vahdat, and
  Kreis]{dockhorn2022differentially}
Tim Dockhorn, Tianshi Cao, Arash Vahdat, and Karsten Kreis.
\newblock Differentially private diffusion models.
\newblock \emph{arXiv preprint arXiv:2210.09929}, 2022.

\bibitem[Du et~al.(2023)Du, Jiang, Wang, Qin, and Zha]{du2023minimizing}
Jiawei Du, Yidi Jiang, Wenqing Wang, Zhenyu Qin, and Zheng-Jun Zha.
\newblock Minimizing the accumulated trajectory error to improve dataset
  distillation.
\newblock In \emph{Proceedings of the IEEE/CVF Conference on Computer Vision
  and Pattern Recognition}, pages 7059--7068, 2023.

\bibitem[Dwork and Roth(2014)]{dp}
Cynthia Dwork and Aaron Roth.
\newblock The algorithmic foundations of differential privacy.
\newblock \emph{Found. Trends Theor. Comput. Sci.}, 9\penalty0
  (3–4):\penalty0 211–407, 2014.

\bibitem[Ghalebikesabi et~al.(2023)Ghalebikesabi, Berrada, Gowal, Ktena,
  Stanforth, Hayes, De, Smith, Wiles, and
  Balle]{ghalebikesabi2023differentially}
Sahra Ghalebikesabi, Leonard Berrada, Sven Gowal, Ira Ktena, Robert Stanforth,
  Jamie Hayes, Soham De, Samuel~L Smith, Olivia Wiles, and Borja Balle.
\newblock Differentially private diffusion models generate useful synthetic
  images.
\newblock \emph{arXiv preprint arXiv:2302.13861}, 2023.

\bibitem[Harder et~al.(2021)Harder, Adamczewski, and Park]{harder2021dp}
Frederik Harder, Kamil Adamczewski, and Mijung Park.
\newblock Dp-merf: Differentially private mean embeddings with randomfeatures
  for practical privacy-preserving data generation.
\newblock In \emph{International conference on artificial intelligence and
  statistics}, pages 1819--1827. PMLR, 2021.

\bibitem[Kingma(2014)]{kingma2014adam}
Diederik~P Kingma.
\newblock Adam: A method for stochastic optimization.
\newblock \emph{arXiv preprint arXiv:1412.6980}, 2014.

\bibitem[Krizhevsky et~al.(2012)Krizhevsky, Sutskever, and
  Hinton]{krizhevsky2012imagenet}
Alex Krizhevsky, Ilya Sutskever, and Geoffrey~E Hinton.
\newblock Imagenet classification with deep convolutional neural networks.
\newblock \emph{NeurIPS}, 25, 2012.

\bibitem[Lee and Hwang(2022)]{lee2022dataset}
Saehyung Lee and Sung~Ju Hwang.
\newblock Dataset condensation with contrastive signals.
\newblock In \emph{Proceedings of the 39th International Conference on Machine
  Learning}, pages 12244--12254. PMLR, 2022.

\bibitem[Liu et~al.(2023)Liu, Li, Xing, Dalal, Li, He, and
  Wang]{liu2023dataset}
Haoyang Liu, Yijiang Li, Tiancheng Xing, Vibhu Dalal, Luwei Li, Jingrui He, and
  Haohan Wang.
\newblock Dataset distillation via the wasserstein metric.
\newblock \emph{arXiv preprint arXiv:2311.18531}, 2023.

\bibitem[Loo et~al.(2022)Loo, Vasudevan, Bayen, and Malone]{loo2022efficient}
Noel Loo, Akash Vasudevan, Alexandre Bayen, and Brandon Malone.
\newblock Efficient dataset distillation using random feature approximation.
\newblock In \emph{Advances in Neural Information Processing Systems}, pages
  1--13, 2022.

\bibitem[Loo et~al.(2023)Loo, Vasudevan, Malone, and Bayen]{loo2023dataset}
Noel Loo, Akash Vasudevan, Brandon Malone, and Alexandre Bayen.
\newblock Dataset distillation with convexified implicit gradients.
\newblock In \emph{Proceedings of the 40th International Conference on Machine
  Learning}, pages 1--15. PMLR, 2023.

\bibitem[Mironov(2017)]{mironov2017renyi}
Ilya Mironov.
\newblock R{\'e}nyi differential privacy.
\newblock In \emph{IEEE CSF}, pages 263--275. IEEE, 2017.

\bibitem[Nguyen et~al.(2021{\natexlab{a}})Nguyen, Novak, Lee, and
  Xiao]{nguyen2021datasetinfinitely}
Timothy Nguyen, Roman Novak, Jaehoon Lee, and Lechao Xiao.
\newblock Dataset distillation with infinitely wide convolutional networks.
\newblock In \emph{Advances in Neural Information Processing Systems}, pages
  5184--5197, 2021{\natexlab{a}}.

\bibitem[Nguyen et~al.(2021{\natexlab{b}})Nguyen, Novak, Xiao, and
  Lee]{nguyen2021dataset}
Timothy Nguyen, Roman Novak, Lechao Xiao, and Jaehoon Lee.
\newblock Dataset meta-learning from kernel ridge-regression.
\newblock In \emph{International Conference on Learning Representations},
  2021{\natexlab{b}}.

\bibitem[Rombach et~al.(2022)Rombach, Blattmann, Lorenz, Esser, and
  Ommer]{rombach2022high}
Robin Rombach, Andreas Blattmann, Dominik Lorenz, Patrick Esser, and Bj{\"o}rn
  Ommer.
\newblock High-resolution image synthesis with latent diffusion models.
\newblock In \emph{Proceedings of the IEEE/CVF conference on computer vision
  and pattern recognition}, pages 10684--10695, 2022.

\bibitem[Sachdeva and McAuley(2023)]{sachdeva2023datadistillationsurvey}
Noveen Sachdeva and Julian McAuley.
\newblock Data distillation: A survey, 2023.

\bibitem[Sajedi et~al.(2023)Sajedi, Khaki, Wang, et~al.]{sajedi2023datadam}
Ahmad Sajedi, Samir Khaki, Rui Wang, et~al.
\newblock {DataDAM}: Efficient dataset distillation with attention matching.
\newblock In \emph{Proceedings of the IEEE/CVF International Conference on
  Computer Vision}, 2023.

\bibitem[Schuhmann et~al.(2022)Schuhmann, Beaumont, Vencu, Gordon, Wightman,
  Cherti, Coombes, Katta, Mullis, Wortsman, et~al.]{schuhmann2022laion}
Christoph Schuhmann, Romain Beaumont, Richard Vencu, Cade Gordon, Ross
  Wightman, Mehdi Cherti, Theo Coombes, Aarush Katta, Clayton Mullis, Mitchell
  Wortsman, et~al.
\newblock Laion-5b: An open large-scale dataset for training next generation
  image-text models.
\newblock \emph{Advances in Neural Information Processing Systems},
  35:\penalty0 25278--25294, 2022.

\bibitem[Shin et~al.(2023)Shin, Bae, Kim, and Moon]{shin2023loss}
Seungjae Shin, Heesun Bae, Sangwoo Kim, and Taesup Moon.
\newblock Loss-curvature matching for dataset selection and condensation.
\newblock In \emph{International Conference on Artificial Intelligence and
  Statistics}, pages 1--14, 2023.

\bibitem[Song et~al.()Song, Meng, and Ermon]{songdenoising}
Jiaming Song, Chenlin Meng, and Stefano Ermon.
\newblock Denoising diffusion implicit models.
\newblock In \emph{International Conference on Learning Representations}.

\bibitem[Song et~al.(2021)Song, Sohl-Dickstein, Kingma, Kumar, Ermon, and
  Poole]{song2021scorebased}
Yang Song, Jascha Sohl-Dickstein, Diederik~P Kingma, Abhishek Kumar, Stefano
  Ermon, and Ben Poole.
\newblock Score-based generative modeling through stochastic differential
  equations.
\newblock In \emph{International Conference on Learning Representations}, 2021.

\bibitem[Vinaroz and Park(2024)]{vinaroz2024differentially}
Margarita Vinaroz and Mijung Park.
\newblock Differentially private kernel inducing points using features from
  scatternets ({DP}-{KIP}-scatternet) for privacy preserving data distillation.
\newblock \emph{Transactions on Machine Learning Research}, 2024.

\bibitem[Wang et~al.(2022)Wang, Zhao, Peng, Zhu, Yang, Wang, Huang, Bilen,
  Wang, and You]{wang2022cafe}
Kai Wang, Bo Zhao, Xiangyu Peng, Zheng Zhu, Shuo Yang, Shuo Wang, Guan Huang,
  Hakan Bilen, Xinchao Wang, and Yang You.
\newblock Cafe: Learning to condense dataset by aligning features.
\newblock In \emph{Proceedings of the IEEE/CVF Conference on Computer Vision
  and Pattern Recognition}, pages 12196--12205, 2022.

\bibitem[Wang et~al.(2018)Wang, Zhu, Torralba, and Efros]{wang2018dataset}
Tongzhou Wang, Jun-Yan Zhu, Antonio Torralba, and Alexei~A Efros.
\newblock Dataset distillation.
\newblock \emph{arXiv preprint arXiv:1811.10959}, 2018.

\bibitem[Wang et~al.(2019)Wang, Balle, and Kasiviswanathan]{wang2019subsampled}
Yu-Xiang Wang, Borja Balle, and Shiva~Prasad Kasiviswanathan.
\newblock Subsampled r{\'e}nyi differential privacy and analytical moments
  accountant.
\newblock In \emph{The 22nd International Conference on Artificial Intelligence
  and Statistics}, pages 1226--1235. PMLR, 2019.

\bibitem[Xiao et~al.(2017)Xiao, Rasul, and Vollgraf]{xiao2017/online}
Han Xiao, Kashif Rasul, and Roland Vollgraf.
\newblock Fashion-mnist: a novel image dataset for benchmarking machine
  learning algorithms, 2017.

\bibitem[Yousefpour et~al.(2021)Yousefpour, Shilov, Sablayrolles, Testuggine,
  Prasad, Malek, Nguyen, Ghosh, Bharadwaj, Zhao, et~al.]{yousefpour2021opacus}
Ashkan Yousefpour, Igor Shilov, Alexandre Sablayrolles, Davide Testuggine,
  Karthik Prasad, Mani Malek, John Nguyen, Sayan Ghosh, Akash Bharadwaj,
  Jessica Zhao, et~al.
\newblock Opacus: User-friendly differential privacy library in pytorch.
\newblock \emph{arXiv preprint arXiv:2109.12298}, 2021.

\bibitem[Zhang et~al.(2024)Zhang, Li, Zhao, et~al.]{zhang2024m3d}
Hansong Zhang, Shikun Li, Bo Zhao, et~al.
\newblock {M3D}: Dataset condensation by minimizing maximum mean discrepancy.
\newblock In \emph{Proceedings of the AAAI Conference on Artificial
  Intelligence}, 2024.

\bibitem[Zhao and Bilen(2021)]{zhao2021differentiable}
Bo Zhao and Hakan Bilen.
\newblock Dataset condensation with differentiable siamese augmentation.
\newblock In \emph{Proceedings of the 38th International Conference on Machine
  Learning}, pages 12674--12685. PMLR, 2021.

\bibitem[Zhao et~al.(2021)Zhao, Mopuri, and Bilen]{zhao2021dataset}
Bo Zhao, Konda~Reddy Mopuri, and Hakan Bilen.
\newblock Dataset condensation with gradient matching.
\newblock In \emph{International Conference on Learning Representations}, 2021.

\bibitem[Zhao et~al.(2023)Zhao, Zhao, and Bilen]{zhao2023improved}
Ganlong Zhao, Bo Zhao, and Hakan Bilen.
\newblock Improved distribution matching for dataset condensation.
\newblock In \emph{Proceedings of the IEEE/CVF Conference on Computer Vision
  and Pattern Recognition}, pages 4903--4912, 2023.

\bibitem[Zheng and Li(2022)]{zheng2022differentially}
Tianhang Zheng and Baochun Li.
\newblock Differentially private dataset condensation.
\newblock 2022.

\bibitem[Zhou et~al.(2022)Zhou, Wang, Chen, et~al.]{zhou2022dataset}
Yongchao Zhou, Jianfei Wang, Jian Chen, et~al.
\newblock Dataset distillation using neural feature regression.
\newblock In \emph{Advances in Neural Information Processing Systems}, pages
  1--12, 2022.

\end{thebibliography}
}

\clearpage
\setcounter{page}{1}
\maketitlesupplementary

\renewcommand{\thesection}{\Alph{section}}  
\renewcommand{\theHsection}{Appendix.\thesection}  
\makeatletter
\renewcommand{\@seccntformat}[1]{~\csname the#1\endcsname\quad}  
\makeatother
\appendix
\crefname{section}{Appendix}{Appendices}  

\section{Theoretical Analysis}
\label{sec:theory}

\begin{lemma}
\label{lemma:ratio}
Let $\gV = \{\vv_j\}_{j=1}^{L}$ be a dataset of $L$ samples $\vv_i \in \mathbb{R}^D$. Let $\mP \in \mathbb{R}^{D \times d}$ be a matrix with orthonormal rows (i.e., $\mP \mP^\top = \mathbb{I}_d$). Suppose that adding Gaussian noise $\veta \sim \mathcal{N}(0, \sigma_1^2 \mathbb{I}_D)$ to the sample mean ensures $(\epsilon, \delta)$-differential privacy:
$$
\hat{\mu}_{\text{orig}} = \frac{1}{L} \sum_{j=1}^{L} \vv_j + \veta.
$$
Then, adding Gaussian noise $\veta' \sim \mathcal{N}(0, \sigma_2^2 \mathbb{I}_d)$ to the projected sample mean:
$$
\hat{\vv}_{\text{back}} = \mP\left( \frac{1}{L} \sum_{j=1}^{L} \mP^\top \vv_j + \veta' \right),
$$
ensures $(\epsilon, \delta)$-differential privacy, provided that
$$
\frac{\sigma_1}{\sigma_2} = \frac{\max_j \|\vv_j\|_2}{\max_j \|\mP^\top \vv_j\|_2}.
$$
\end{lemma}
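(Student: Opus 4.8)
The plan is to reduce the statement to a single invariant of the Gaussian mechanism, namely that its $(\epsilon,\delta)$ guarantee depends on the data only through the ratio of noise scale to $\ell_2$-sensitivity; once this is isolated, the result follows by computing two sensitivities, matching their ratio to the noise ratio, and closing with a post-processing argument for the reconstruction map $\mP(\cdot)$. First I would fix the adjacency notion used throughout the paper (the add/remove-one model) and compute the sensitivity of the original statistic. The quantity released before noise is the mean $\bar{\vv}=\frac{1}{L}\sum_{j}\vv_j$; inserting or deleting a single sample $\vv$ changes the sum by $\vv$ and hence the mean by $\vv/L$, and since each $\|\vv_j\|_2\le \max_j\|\vv_j\|_2=:B$ (the clipping bound $K$ of the main text), the $\ell_2$-sensitivity is $\Delta_{\mathrm{orig}}=B/L$. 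The hypothesis that $\veta\sim\mathcal{N}(0,\sigma_1^2\mathbb{I}_D)$ achieves $(\epsilon,\delta)$-DP is, by Gaussian-mechanism calibration, exactly a lower bound on $\sigma_1/\Delta_{\mathrm{orig}}$.

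Next I would repeat the sensitivity computation for the projected statistic $g(\gV)=\frac{1}{L}\sum_{j}\mP^\top\vv_j=\mP^\top\bar{\vv}$. The same add/remove perturbation changes $g$ by $\mP^\top\vv/L$, whose norm is at most $\max_j\|\mP^\top\vv_j\|_2/L=:B'/L$, so $\Delta_{\mathrm{proj}}=B'/L$; here $B'\le B$ because the orthonormality relation $\mP^\top\mP=\mathbb{I}_d$ forces $\|\mP^\top\|_{\mathrm{op}}=1$, which is consistent with the norm-reduction effect quantified in \cref{thm:ser}. Because the (sub-sampled) Gaussian mechanism's privacy profile is a function only of the sampling rate, which is unchanged, and the noise-to-sensitivity ratio, the projected release inherits the same $(\epsilon,\delta)$-DP as the original precisely when $\sigma_2/\Delta_{\mathrm{proj}}=\sigma_1/\Delta_{\mathrm{orig}}$. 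Substituting the two sensitivities gives $\sigma_1/\sigma_2=B/B'=\max_j\|\vv_j\|_2/\max_j\|\mP^\top\vv_j\|_2$, which is exactly the stated calibration.

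To finish, I would observe that the reconstruction $\hat{\vv}_{\mathrm{back}}=\mP(\cdot)$ applies a fixed linear map that does not depend on $\gV$ to an already-private output, so by the post-processing theorem it preserves the $(\epsilon,\delta)$-DP guarantee; thus no privacy is spent mapping back to $\mathbb{R}^D$. The main obstacle, and the only step that is not mechanical, is justifying cleanly that the guarantee depends on the data solely through $\sigma/\Delta$ — so that equalizing this ratio transfers the original guarantee verbatim to the projected space — and confirming that subsampling amplification is insensitive to replacing $\vv_j$ by $\mP^\top\vv_j$. The sensitivity bounds and the operator-norm inequality $\|\mP^\top\|_{\mathrm{op}}=1$ are routine by comparison.
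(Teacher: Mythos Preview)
Your approach is correct and essentially identical to the paper's: both proofs compute the $\ell_2$-sensitivities of the original and projected sample means ($\Delta_{\mathrm{orig}}=\tfrac{1}{L}\max_j\|\vv_j\|_2$ and $\Delta_{\mathrm{proj}}=\tfrac{1}{L}\max_j\|\mP^\top\vv_j\|_2$) and then equate the noise-to-sensitivity ratios of the Gaussian mechanism. Your version is slightly more careful in that you explicitly invoke post-processing for the reconstruction map $\mP(\cdot)$, which the paper's proof leaves implicit; the subsampling remark is unnecessary here but harmless.
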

\begin{proof}
We start by recalling that the Gaussian mechanism provides $(\epsilon, \delta)$-differential privacy when noise drawn from $\mathcal{N}(0, \sigma^2 \mathbb{I})$ is added to a function $\mathcal{M}$, where the noise scale $\sigma$ is proportional to the function's $\ell_2$-sensitivity $\Delta_\mathcal{M}$.

The sensitivity of the sample mean function $\mathcal{M}_{\text{orig}}(\gV) = \frac{1}{L} \sum_{i=1}^{L} \vv_i$ is given by
$$
\Delta_{\text{orig}} = \max_{\gV,\, \gV'} \left\| \mathcal{M}_{\text{orig}}(\gV) - \mathcal{M}_{\text{orig}}(\gV') \right\|_2,
$$
where $\gV$ and $\gV'$ differ in at most one element. The maximum change occurs when one sample is replaced, yielding
$$
\Delta_{\text{orig}} = \frac{1}{L} \max_i \| \vv_i \|_2.
$$

Similarly, for the projected mean function $\mathcal{M}_{\text{proj}}(\gV) = \frac{1}{L} \sum_{i=1}^{L} \mP^\top \vv_i$, the sensitivity is
$$
\Delta_{\text{proj}} = \frac{1}{L} \max_i \left\| \mP^\top \vv_i \right\|_2.
$$

The Gaussian mechanism requires the noise scale $\sigma$ to be proportional to the sensitivity. Therefore, the ratio of the noise scales should match the ratio of sensitivities:
$$
\frac{\sigma_1}{\sigma_2} = \frac{\Delta_{\text{orig}}}{\Delta_{\text{proj}}} = \frac{\max_i \| \vv_i \|_2}{\max_i \left\| \mP^\top \vv_i \right\|_2}.
$$

\end{proof}

\begin{theorem}
\label{thm:ser}
    Under the same budget of differential privacy $(\epsilon, \delta$, the difference of MSE with and without projection $\mP$ in estimating the true mean $\vmu$ can be decomposed into the three terms:
    \begin{align*}
        &\text{MSE}_{\text{orig}} - \text{MSE}_{\text{back}} = 
        \underbrace{\frac{1}{L} \text{Tr}\left( (I - \mP \mP^\top) \Sigma_r \right)}_{\text{Projection Residual}} \\ 
        &+ \underbrace{\sigma_{\text{proj}}^2 \left( \frac{\|\vv_j\|_2^2}{\|\mP^\top \vv_j\|_2^2} D - d \right)}_{\text{Dimensional Reduction Effect}} \\
        &- \underbrace{\| (I - \mP \mP^\top) \vmu \|_2^2 + \frac{1}{L} \text{Tr}\left( (I - \mP \mP^\top) \Sigma_p \right)}_{\text{Projection Error}}.
    \end{align*}
\end{theorem}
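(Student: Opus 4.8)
The plan is to compute each MSE exactly through a bias--variance decomposition and then subtract, letting the privacy constraint from Lemma~\ref{lemma:ratio} tie the two noise scales together. First I would substitute the signal model $\vv_i = \vmu + \vp_i + \vr_i$ into $\hat{\vmu}_{\text{orig}} - \vmu = \frac{1}{L}\sum_i(\vp_i + \vr_i) + \veta_{\text{orig}}$. Reading the Assumption as saying that $\vp_i$, $\vr_i$ and the DP noise are mutually independent, zero-mean, and uncorrelated across the $L$ samples, every cross term in the expected squared norm vanishes and the per-sample variances add, giving
$$\text{MSE}_{\text{orig}} = \tfrac{1}{L}\text{Tr}(\Sigma_p) + \tfrac{1}{L}\text{Tr}(\Sigma_r) + \sigma_{\text{orig}}^2 D,$$
where I use $\mathbb{E}\|\frac{1}{L}\sum_i \vp_i\|_2^2 = \frac{1}{L}\text{Tr}(\Sigma_p)$ (and likewise for $\vr_i$) together with $\mathbb{E}\|\veta_{\text{orig}}\|_2^2 = \sigma_{\text{orig}}^2 D$.

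Next I would handle $\text{MSE}_{\text{back}}$. Writing $Q := \mP\mP^\top$, which is a symmetric idempotent projector because $\mP$ has orthonormal columns ($\mP^\top\mP = \mathbb{I}_d$), I would expand
$$\hat{\vmu}_{\text{back}} - \vmu = (Q - I)\vmu + Q\,\tfrac{1}{L}\sum_i(\vp_i + \vr_i) + \mP\veta_{\text{proj}}.$$
The deterministic term contributes the squared bias $\|(I-Q)\vmu\|_2^2$. For the stochastic terms I would use $\mathbb{E}\|Q\vp_i\|_2^2 = \mathbb{E}[\vp_i^\top Q\vp_i] = \text{Tr}(Q\Sigma_p)$ (relying on $Q^\top Q = Q$), and analogously for $\vr_i$, while the projected noise gives $\mathbb{E}\|\mP\veta_{\text{proj}}\|_2^2 = \sigma_{\text{proj}}^2\,\text{Tr}(\mP^\top\mP) = \sigma_{\text{proj}}^2 d$. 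This yields
$$\text{MSE}_{\text{back}} = \|(I-Q)\vmu\|_2^2 + \tfrac{1}{L}\text{Tr}(Q\Sigma_p) + \tfrac{1}{L}\text{Tr}(Q\Sigma_r) + \sigma_{\text{proj}}^2 d.$$

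Subtracting, I would collect the covariance traces as $\frac{1}{L}\text{Tr}((I-Q)\Sigma_r)$ and $\frac{1}{L}\text{Tr}((I-Q)\Sigma_p)$ and keep $-\|(I-Q)\vmu\|_2^2$, which reproduces the Projection Residual and Projection Error groupings. The leftover noise contribution is $\sigma_{\text{orig}}^2 D - \sigma_{\text{proj}}^2 d$, and this is where Lemma~\ref{lemma:ratio} is essential: the equal-budget constraint forces $\sigma_{\text{orig}}^2 = \sigma_{\text{proj}}^2\big(\max_j\|\vv_j\|_2/\max_j\|\mP^\top\vv_j\|_2\big)^2$, so substituting turns this into the Dimensional Reduction Effect term $\sigma_{\text{proj}}^2\big(\frac{\max_j\|\vv_j\|_2^2}{\max_j\|\mP^\top\vv_j\|_2^2}D - d\big)$.

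I expect the difficulty to be bookkeeping rather than conceptual: the care lies in confirming that every cross-covariance and noise cross term genuinely vanishes (which needs the mutual-independence, zero-mean, and across-sample-uncorrelated reading of the Assumption so that the sample-mean variance scales as $1/L$), and in applying the projector identities $Q^2 = Q$, $Q^\top = Q$, $\mP^\top\mP = \mathbb{I}_d$ consistently. The one genuinely non-mechanical point is that the two estimators must be compared at the \emph{same} privacy budget, so $\sigma_{\text{orig}}$ and $\sigma_{\text{proj}}$ are not independent parameters but are linked by the sensitivity ratio of Lemma~\ref{lemma:ratio}; recognizing this is what replaces the naive $\sigma^2(D-d)$ intuition with the norm-ratio-weighted expression actually claimed.
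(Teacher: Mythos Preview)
Your proposal is correct and follows essentially the same approach as the paper: a bias--variance decomposition of each estimator, independence to kill cross terms, trace identities for the covariance contributions, and Lemma~\ref{lemma:ratio} to convert $\sigma_{\text{orig}}^2 D - \sigma_{\text{proj}}^2 d$ into the norm-ratio form. If anything, you are slightly more explicit than the paper about needing across-sample uncorrelatedness for the $1/L$ scaling and about the projector identities $Q^2=Q$, $\mP^\top\mP=\mathbb{I}_d$.
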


\begin{proof}
We analyze the MSE in both the original and projected spaces to establish the theorem.

First, consider the noisy mean in the original space:
$$
\hat{\vmu}_{\text{orig}} = \vmu + \mathbf{\vomega} + \mathbf{\veta}_{\text{orig}},
$$
where $\mathbf{\vomega} = \frac{1}{L} \sum_{j} (\vp_j + \vr_j)$ represents the sampling deviation from the true mean due to finite sample size and inherent data variability.

The MSE in the original space is then:
$$
\text{MSE}_{\text{orig}} = \mathbb{E}\left[ \left\| \hat{\vmu}_{\text{orig}} - \vmu \right\|_2^2 \right] = \mathbb{E}\left[ \left\| \mathbf{\vomega} + \mathbf{\veta}_{\text{orig}} \right\|_2^2 \right].
$$
Expanding the squared norm, we obtain:
$$
\text{MSE}_{\text{orig}} = \mathbb{E}\left[ \| \mathbf{\vomega} \|_2^2 \right] + \mathbb{E}\left[ \| \mathbf{\veta}_{\text{orig}} \|_2^2 \right] + 2 \mathbb{E}\left[ \mathbf{\vomega}^\top \mathbf{\veta}_{\text{orig}} \right].
$$
Since $\mathbf{\vomega}$ and $\mathbf{\veta}_{\text{orig}}$ are independent and both have zero mean, the cross term vanishes:
$$
\mathbb{E}\left[ \mathbf{\vomega}^\top \mathbf{\veta}_{\text{orig}} \right] = 0.
$$
Thus, the MSE in the original space simplifies to:
$$
\text{MSE}_{\text{orig}} = \mathbb{E}\left[ \| \mathbf{\vomega} \|_2^2 \right] + \mathbb{E}\left[ \| \mathbf{\veta}_{\text{orig}} \|_2^2 \right].
$$

Next, consider the noisy mean in the projected space:
$$
\hat{\vmu}_{\text{proj}} = \mP^\top (\vmu + \mathbf{\vomega}) + \mathbf{\veta}_{\text{proj}},
$$
and the reconstructed noisy mean in the original space:
$$
\hat{\vmu}_{\text{back}} = \mP \hat{\vmu}_{\text{proj}} = \mP \mP^\top (\vmu + \mathbf{\vomega}) + \mP \mathbf{\veta}_{\text{proj}}.
$$

We introduce an error term to account for the recover error from PCA transformation. Specifically, define:
$$
\xi_\mP = \mP \mP^\top \vmu - \vmu,
$$
which quantifies the deviation of the true mean $\vmu$ from its projection onto the subspace spanned by $\mP$. If $\mP$ perfectly captures the mean, then $\xi_\mP = 0$. Otherwise, $\xi_\mP$ represents the component of $\vmu$ orthogonal to the subspace spanned by $\mP$.

Substituting this into the expression for $\hat{\vmu}_{\text{back}}$, we obtain:
$$
\hat{\vmu}_{\text{back}} = \vmu + \mP \mP^\top \mathbf{\vomega} + \mP \mathbf{\veta}_{\text{proj}} + \xi_\mP.
$$

The MSE in the projected and reconstructed space is therefore:
\begin{align*}
\text{MSE}_{\text{back}} &= \mathbb{E}\left[ \left\| \hat{\vmu}_{\text{back}} - \vmu \right\|_2^2 \right] \\
&= \mathbb{E}\left[ \left\| \mP \mP^\top \mathbf{\vomega} + \mP \mathbf{\veta}_{\text{proj}} + \xi_\mP \right\|_2^2 \right] \\
&= \mathbb{E}\left[ \| \mP \mP^\top \mathbf{\vomega} \|_2^2 \right] + \mathbb{E}\left[ \| \mP \mathbf{\veta}_{\text{proj}} \|_2^2 \right] + \mathbb{E}\left[ \| \xi_\mP \|_2^2 \right]\\ & +  2 \mathbb{E}\left[ (\mP \mP^\top \mathbf{\vomega})^\top (\mP \mathbf{\veta}_{\text{proj}}) \right] + 2 \mathbb{E}\left[ (\mP \mP^\top \mathbf{\vomega})^\top \xi_\mP \right] \\ &+ 2 \mathbb{E}\left[ (\mP \mathbf{\veta}_{\text{proj}})^\top \xi_\mP \right].
\end{align*}
Given that $\mathbf{\vomega}$, $\mathbf{\veta}_{\text{proj}}$, and $\xi_\mP$ are all zero-mean and mutually independent, the cross terms vanish:
$$
\left\{
\begin{array}{l}
\mathbb{E}\left[ (\mP \mP^\top \mathbf{\vomega})^\top (\mP \mathbf{\veta}_{\text{proj}}) \right] = 0, \\ [10pt]
\mathbb{E}\left[ (\mP \mP^\top \mathbf{\vomega})^\top \xi_\mP \right] = 0, \\ [10pt]
\mathbb{E}\left[ (\mP \mathbf{\veta}_{\text{proj}})^\top \xi_\mP \right] = 0.
\end{array}
\right.
$$

Thus, the MSE in the projected and reconstructed space simplifies to:
$$
\text{MSE}_{\text{back}} = \mathbb{E}\left[ \| \mP \mP^\top \mathbf{\vomega} \|_2^2 \right] + \mathbb{E}\left[ \| \mP \mathbf{\veta}_{\text{proj}} \|_2^2 \right] + \mathbb{E}\left[ \| \xi_\mP \|_2^2 \right].
$$

To evaluate these expectations, we consider the properties of covariance matrices. The covariance of $\mathbf{\vomega}$ is:
$$
\text{Cov}\left( \mathbf{\vomega} \right) = \frac{1}{L} \left( \Sigma_p + \Sigma_r \right).
$$
Thus, the first term becomes:
\begin{align*}
\mathbb{E}\left[ \| \mP \mP^\top \mathbf{\vomega} \|_2^2 \right] &= \text{Tr}\left( \mP \mP^\top \text{Cov}\left( \mathbf{\vomega} \right) \right) \\
&= \frac{1}{L} \text{Tr}\left( \mP \mP^\top (\Sigma_p + \Sigma_r) \right).
\end{align*}

For the second term, since $\mathbf{\veta}_{\text{proj}} \sim \mathcal{N}(0, \sigma_{\text{proj}}^2 I_d)$, we have:
\begin{align*}
\mathbb{E}\left[ \| \mP \mathbf{\veta}_{\text{proj}} \|_2^2 \right] &= \text{Tr}\left( \mP^\top \mP \mathbb{E}\left[ \mathbf{\veta}_{\text{proj}} \mathbf{\veta}_{\text{proj}}^\top \right] \right) \\ &= \text{Tr}\left( \mP^\top \mP \sigma_{\text{proj}}^2 I_d \right) \\ &= \sigma_{\text{proj}}^2 \text{Tr}\left( \mP^\top \mP \right) \\
&= \sigma_{\text{proj}}^2 d.
\end{align*}
The third term, $\mathbb{E}\left[ \| \xi_\mP \|_2^2 \right]$, quantifies the error between the mean estimated in the subspace and its projection back to the original space compared to the true mean:
$$
\mathbb{E}\left[ \| \xi_\mP \|_2^2 \right] = \| \xi_\mP \|_2^2 = \| \mP \mP^\top \vmu - \vmu \|_2^2.
$$

Therefore, the MSE in the projected and reconstructed space is:
$$
\text{MSE}_{\text{back}} = \frac{1}{L} \text{Tr}\left( \mP \mP^\top (\Sigma_p + \Sigma_r) \right) + \sigma_{\text{proj}}^2 d + \| \mP \mP^\top \vmu - \vmu \|_2^2.
$$

Comparing this with the MSE in the original space:
$$
\text{MSE}_{\text{orig}} = \frac{1}{L} \text{Tr}\left( \Sigma_p + \Sigma_r \right) + \sigma_{\text{orig}}^2 D,
$$
we define the difference $\Delta$ as:

\begin{align*}
\Delta &= \text{MSE}_{\text{orig}} - \text{MSE}_{\text{back}} \\ &= \frac{1}{L} \text{Tr}\left( \Sigma_p + \Sigma_r \right) + \sigma_{\text{orig}}^2 D \\
&- \left( \frac{1}{L} \text{Tr}\left( \mP \mP^\top (\Sigma_p + \Sigma_r) \right) + \sigma_{\text{proj}}^2 d + \| \mP \mP^\top \vmu - \vmu \|_2^2 \right).
\end{align*}
Simplifying the trace terms, we observe that:
\begin{align*}
\text{Tr}\left( \Sigma_p + \Sigma_r \right) &- \text{Tr}\left( \mP \mP^\top (\Sigma_p + \Sigma_r) \right) \\& = \text{Tr}\left( (I - \mP \mP^\top) (\Sigma_p + \Sigma_r) \right).
\end{align*}
According to \cref{lemma:ratio}, we have:
$$
\sigma_{\text{orig}}^2 D - \sigma_{\text{proj}}^2 d = \sigma_{\text{proj}}^2 \left( \frac{\max_j\|\vv_j\|_2^2}{\max_j\|\mP^\top \vv_j\|_2^2} D - d \right).
$$
Substituting above into the expression for $\mathbf{\vomega}$, we obtain:
    \begin{align*}
        &\Delta = 
        \underbrace{\frac{1}{L} \text{Tr}\left( (I - \mP \mP^\top) \Sigma_r \right)}_{\text{Projection Residual}} \\ 
        &+ \underbrace{\sigma_{\text{proj}}^2 \left( \frac{\max_j\|\vv_j\|_2^2}{\max_j\|\mP^\top \vv_j\|_2^2} D - d \right)}_{\text{Dimensional Reduction Effect}} \\
        &- \underbrace{\| (I - \mP \mP^\top) \vmu \|_2^2 + \frac{1}{L} \text{Tr}\left( (I - \mP \mP^\top) \Sigma_p \right)}_{\text{Projection Error}}.
    \end{align*}
\end{proof}

\begin{theorem}
    \label{thm:composition}
    The process of distilling the private dataset $\mathcal{D}$ with an $(\epsilon_1, \delta_1)$-DP mechanism, supported by SER with an auxiliary dataset $\mathcal{D}_{\text{aux}}$ satisfying $(\epsilon_2, \delta2)$-DP to $\mathcal{D}$, achieves $(\epsilon_1 + \epsilon_2, \delta_1 + \delta_2)$-DP to $\mathcal{D}$.
\end{theorem}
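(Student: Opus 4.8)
The plan is to treat Theorem~\ref{thm:composition} as a direct consequence of the (adaptive) basic composition theorem together with the post-processing property of differential privacy. The first step is to locate precisely the two stages at which the pipeline reads the private dataset $\mathcal{D}$, and to argue that every remaining stage is merely post-processing of an already DP-protected quantity.

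I would first identify the two data-dependent mechanisms. Let $M_{\text{gen}}$ be the training of the generative model (e.g., DPDM) on $\mathcal{D}$, whose output is used to draw the auxiliary set $\mathcal{D}_{\text{aux}}$; by hypothesis $M_{\text{gen}}$ is $(\epsilon_2,\delta_2)$-DP with respect to $\mathcal{D}$. Let $M_{\text{dist}}$ be the sampling stage of DOS (signal extraction, clipping to norm $K$, aggregation, and Gaussian noise addition) producing the protected signal set $\mathcal{S}$; by hypothesis $M_{\text{dist}}$ is $(\epsilon_1,\delta_1)$-DP with respect to $\mathcal{D}$. The subspace $\mathbf{P}$ used by SER is obtained by PCA applied to $\mathcal{D}_{\text{aux}}$ alone and never re-queries $\mathcal{D}$; hence, conditioned on $\mathcal{D}_{\text{aux}}$, forming $\mathbf{P}$ is post-processing of the output of $M_{\text{gen}}$ and adds no privacy cost on $\mathcal{D}$.

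The central step is to handle the adaptive dependency between the two mechanisms: $\mathbf{P}$ is a function of $\mathcal{D}_{\text{aux}}$, which is itself a function of $\mathcal{D}$, and $\mathbf{P}$ is then supplied to $M_{\text{dist}}$ as side information. The two data-touching releases are therefore sequentially adaptive rather than independent, so I would invoke the \emph{adaptive} form of the basic composition theorem. The key requirement is that the $(\epsilon_1,\delta_1)$-guarantee of $M_{\text{dist}}$ hold uniformly over every admissible value of the side input $\mathbf{P}$; this is ensured because, for each fixed $\mathbf{P}$, the noise injected in the sampling stage is calibrated (via Lemma~\ref{lemma:ratio}) to meet exactly the $(\epsilon_1,\delta_1)$ budget, so $\mathcal{D}\mapsto M_{\text{dist}}(\mathcal{D};\mathbf{P})$ is $(\epsilon_1,\delta_1)$-DP for every $\mathbf{P}$. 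Adaptive composition then yields that the joint release $(\mathcal{D}_{\text{aux}},\mathcal{S})$ is $(\epsilon_1+\epsilon_2,\delta_1+\delta_2)$-DP with respect to $\mathcal{D}$.

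Finally, I would close by noting that the optimization stage maps the tuple $(\mathcal{S},\mathbf{P},\{\zeta_i,\theta_i\})$ to the distilled dataset $\mathcal{Z}$ without ever reading $\mathcal{D}$ again; by post-processing, $\mathcal{Z}$ inherits the $(\epsilon_1+\epsilon_2,\delta_1+\delta_2)$-DP guarantee, which is the claim. The main obstacle is precisely this adaptivity: one must confirm that $M_{\text{dist}}$ achieves its $(\epsilon_1,\delta_1)$ budget for every possible auxiliary input $\mathbf{P}$ derived from $\mathcal{D}_{\text{aux}}$, so that the adaptive composition theorem applies rather than only its non-adaptive special case.
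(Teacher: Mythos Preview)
Your proposal is correct and follows essentially the same route as the paper: identify the two mechanisms that touch $\mathcal{D}$ (auxiliary-data generation and the DOS sampling stage), argue that everything else is post-processing, and apply basic composition. You are in fact somewhat more careful than the paper, explicitly invoking the \emph{adaptive} form of composition and verifying that the $(\epsilon_1,\delta_1)$ guarantee of $M_{\text{dist}}$ holds uniformly over every auxiliary input $\mathbf{P}$; the paper glosses over this dependency by asserting that $\mathcal{M}_2$ ``maintains its DP guarantee independently of $\mathcal{D}_{\text{aux}}$'' without isolating the adaptivity requirement.
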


\begin{proof}
    
To prove Theorem \ref{thm:composition}, we utilize fundamental properties of differential privacy, specifically the \textit{Basic Composition Theorem} and the \textit{Post-Processing Theorem}.

\begin{lemma}[Basic Composition Theorem \cite{dp}]
\label{lemma:basic}
    If a randomized mechanism $\mathcal{M}_1$ satisfies $(\epsilon_1, \delta_1)$-DP and another randomized mechanism $\mathcal{M}_2$ satisfies $(\epsilon_2, \delta_2)$-DP, then the sequential composition of these mechanisms, defined as $\mathcal{M} = \mathcal{M}_2 \circ \mathcal{M}_1$, satisfies $(\epsilon_1 + \epsilon_2, \delta_1 + \delta_2)$-DP.
\end{lemma}

\begin{lemma}[Post-Processing Theorem \cite{dp}]
\label{lemma:post}
Any data-independent transformation of the output of a differentially private mechanism does not degrade its privacy guarantees. Formally, if $\mathcal{M}$ satisfies $(\epsilon, \delta)$-DP, then for any deterministic or randomized function $f$, the mechanism $f \circ \mathcal{M}$ also satisfies $(\epsilon, \delta)$-DP.
\end{lemma}

We define the two mechanisms involved in the process as follows.

Let $\mathcal{M}_1$ represent the mechanism responsible for SER. The input to $\mathcal{M}_1$ is the private dataset $\mathcal{D}$, and its output is the auxiliary dataset $\mathcal{D}_{\text{aux}}$. By assumption, $\mathcal{M}_1$ satisfies $(\epsilon_1, \delta_1)$-differential privacy with respect to $\mathcal{D}$.

Let $\mathcal{M}_2$ represent the mechanism responsible for the distillation process. The inputs to $\mathcal{M}_2$ are the private dataset $\mathcal{D}$ and the auxiliary dataset $\mathcal{D}_{\text{aux}}$, and its output is the distilled dataset $\mathcal{Z}$. By assumption, $\mathcal{M}_2$ satisfies $(\epsilon_2, \delta_2)$-differential privacy with respect to $\mathcal{D}$.

It is important to note that $\mathcal{M}_2$ utilizes $\mathcal{D}_{\text{aux}}$, which is already the output of $\mathcal{M}_1$. However, since $\mathcal{M}_1$ ensures that $\mathcal{D}_{\text{aux}}$ is $(\epsilon_1, \delta_1)$-DP with respect to $\mathcal{D}$, any further processing of $\mathcal{D}_{\text{aux}}$ by $\mathcal{M}_2$ is considered post-processing of a DP-protected output.

Applying \cref{lemma:post}, the usage of $\mathcal{D}_{\text{aux}}$ by $\mathcal{M}_2$ does not introduce any additional privacy loss beyond what is already accounted for by $\mathcal{M}_1$. Therefore, $\mathcal{M}_2$ maintains its $(\epsilon_2, \delta_2)$-DP guarantee with respect to $\mathcal{D}$ independently of $\mathcal{D}_{\text{aux}}$.

Since $\mathcal{M}_1$ and $\mathcal{M}_2$ are applied sequentially, we apply \cref{lemma:post}. The cumulative privacy loss incurred by applying both mechanisms in sequence is the sum of their individual privacy parameters.

Formally, the overall mechanism $\mathcal{M}$, defined as:
$$
\mathcal{M} = \mathcal{M}_2 \circ \mathcal{M}_1
$$
satisfies:
$$
\mathcal{M} \text{ satisfies } (\epsilon_1 + \epsilon_2, \delta_1 + \delta_2)\text{-DP}.
$$

By sequentially applying $\mathcal{M}_1$ and $\mathcal{M}_2$, and leveraging both the Basic Composition and Post-Processing Theorems, we conclude that the combined process satisfies $(\epsilon_1 + \epsilon_2, \delta_1 + \delta_2)$-DP with respect to the private dataset $\mathcal{D}$.

\end{proof}
\section{Additional Quantitative Analysis}
\label{sec:more_quantitative_analysis}

\subsection{Effect of Privacy‑Budget Split}
\label{app:budget-split}

Table~\ref{tab:privacy_splits} shows how allocating the total budget $(\epsilon{=}1.0)$ between auxiliary data generation ($\epsilon_1$) and DP-based optimation ($\epsilon_2$) affects downstream accuracy. We observe that allocating \,$\epsilon_1{:}\epsilon_2{=}\,0.8{:}0.2$\, offers a good trade-off.

\begin{table}[h]
    \centering
    \resizebox{\linewidth}{!}{
    \begin{tabular}{c|cc|cc|cc|cc|cc}
    \toprule
    \textbf{($\epsilon_1$, $\epsilon_2$)} & \multicolumn{2}{c|}{\textbf{(0.9, 0.1)}} & \multicolumn{2}{c|}{\textbf{(0.8, 0.2)}} & \multicolumn{2}{c|}{\textbf{(0.7, 0.3)}} & \multicolumn{2}{c|}{\textbf{(0.6, 0.4)}} & \multicolumn{2}{c}{\textbf{(0.5, 0.5)}} \\
    \midrule
    \textbf{IPC} & 10 & 50 & 10 & 50 & 10 & 50 & 10 & 50 & 10 & 50 \\
    \midrule
    MNIST & 96.3 & 96.5 & 96.4 & 96.7 & 95.9 & 96.1 & 94.9 & 95.2 & 93.2 & 94.5 \\
    FashionMNIST & 80.2 & 82.9 & 80.1 & 83.1 & 79.7 & 82.4 & 78.8 & 80.8 & 76.2 & 79.4 \\
    \bottomrule
    \end{tabular}}
    \caption{Accuracy (\%) under different privacy‑budget splits
    $\epsilon_1{+}\epsilon_2{=}1.0$, fixing $\delta_1=\delta_2=5\times10^6$. Results show that allocating
    \,$\epsilon_1{:}\epsilon_2{=}\,0.8{:}0.2$\, offers a overall good trade‑off.}
    \label{tab:privacy_splits}
\end{table}

\subsection{SER Performance Across Varying Noise Levels \& Subspace Dimensions}
\label{sec:ser_noise}

Figure~\ref{fig:noise_multiplier_mse_mnist} details how the mean squared error (MSE) of mean estimation evolves on MNIST when varying both the \emph{noise multiplier} and the number of retained \emph{subspace dimensions} (horizontal axis in each subplot). Solid curves denote our method with SER (\emph{w/ SER}); dashed curves are the vanilla DP baseline (\emph{w/o SER}). A clear pattern, consistent with the residual decomposition in Theorem~\ref{thm:ser}, emerges:

\paragraph{Low-noise regime (noise multiplier $\lesssim 0.4 \times 10^{-3}$).}
Here, the DP noise injected per coordinate is small, so the total error is dominated by the \emph{projection error} introduced by compressing and reconstructing the data. In this regime, SER can even \emph{increase} MSE if the bottleneck is too tight; the loss of information outweighs the modest noise reduction. Consequently, retaining more subspace dimensions monotonically lowers the error, and the gap between ``w/'' and ``w/o'' SER narrows.

\paragraph{High-noise regime (noise multiplier $\gtrsim 0.9 \times 10^{-3}$).}
When the privacy budget is tight, the additive Gaussian noise dominates. Dimensionality reduction now acts as a signal-to-noise enhancer: a lower-rank subspace filters out much of the high-dimensional noise before reconstruction. As a result, SER yields a pronounced MSE drop relative to the baseline, particularly when only a few hundred components are kept. Beyond this point, adding more dimensions simply reintroduces noise and the benefit diminishes.

\paragraph{Intermediate-noise regime ($\sim 0.5 \times 10^{-3}$ to $0.8 \times 10^{-3}$).}
At moderate noise levels, the two error sources balance each other. The MSE curves adopt a classic U-shape, indicative of a trade-off: MSE first decreases as noise is tamed by projection, reaches a minimum at an \emph{optimal} dimensionality (typically 300–800 components), then increases again as projection bias begins to dominate. This turning point aligns with the crossover predicted by the dimensional-reduction effect term in Theorem~\ref{thm:ser}.

\medskip
Together, these three regimes offer actionable insight into how SER should be tuned in practice:  
\begin{itemize}
    \item When privacy is \textbf{loose}, favor a larger subspace or skip SER entirely.
    \item When privacy is \textbf{tight}, reduce dimensionality aggressively to suppress noise.
    \item For \textbf{intermediate} privacy budgets, select the number of subspace dimensions that minimizes MSE.
\end{itemize}

We apply this same strategy to FashionMNIST and CIFAR-10 (Figures~\ref{fig:noise_multiplier_mse_fmnist} and~\ref{fig:noise_multiplier_mse_cifar}), and observe analogous trends.

\begin{figure*}
    \centering
    \includegraphics[width=\linewidth]{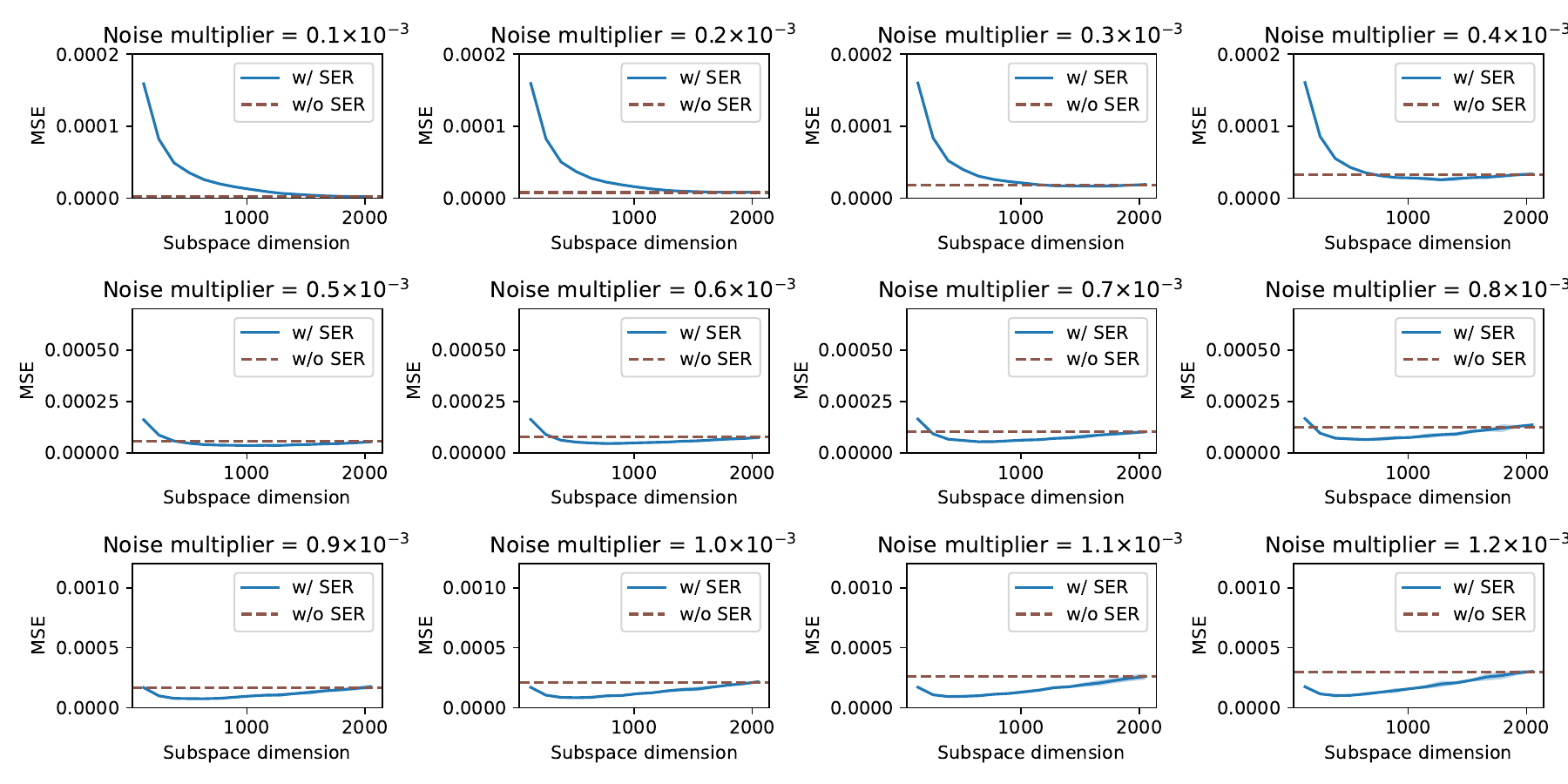}
    \caption{\textbf{MNIST: MSE of mean estimation as a function of retained subspace dimensions and noise multiplier.}
    Each subplot corresponds to a different \emph{noise multiplier} (privacy level).  
    The horizontal axis shows the number of retained subspace dimensions; the vertical axis shows mean‑squared error (MSE).  
    Solid curves are our method with \emph{SER}; dashed curves are the vanilla DP baseline.}
    \label{fig:noise_multiplier_mse_mnist}
\end{figure*}

\begin{figure*}
    \centering
    \includegraphics[width=\linewidth]{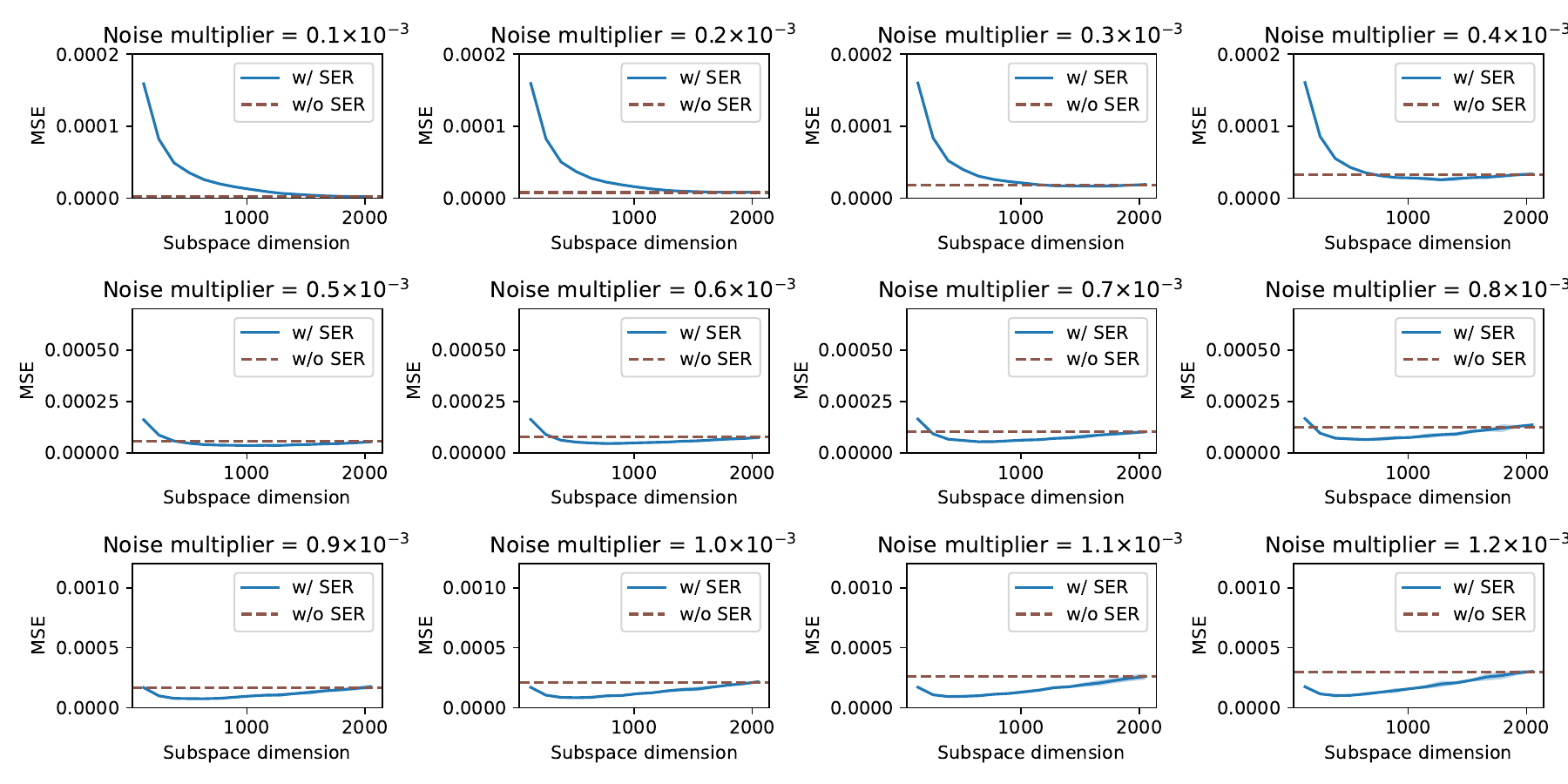}
    \caption{\textbf{FashionMNIST: MSE of mean estimation versus subspace dimension and noise multiplier.}
    Plot settings match Fig.~\ref{fig:noise_multiplier_mse_mnist}.  
    FashionMNIST exhibits the same qualitative behavior: SER offers little benefit in the low‑noise regime, achieves a clear optimum in the intermediate regime (300–800 components), and substantially reduces MSE under tight privacy budgets (high noise multipliers).}
    \label{fig:noise_multiplier_mse_fmnist}
\end{figure*}

\begin{figure*}
    \centering
    \includegraphics[width=\linewidth]{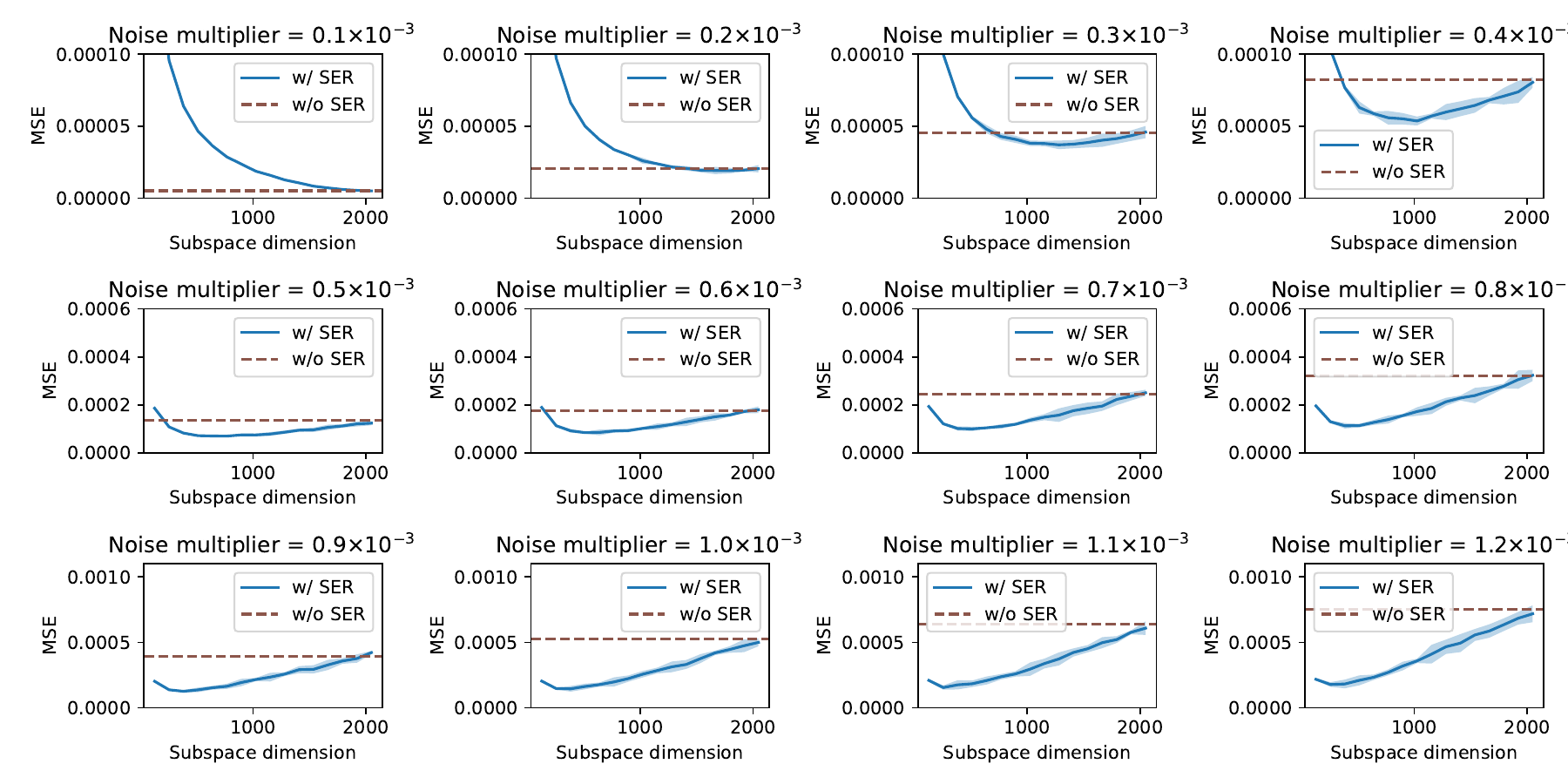}
    \caption{\textbf{CIFAR‑10: MSE of mean estimation versus subspace dimension and noise multiplier.}
    Despite the higher input dimensionality of CIFAR‑10, the same trends appear: SER markedly lowers the MSE when privacy is tight (high noise), has diminishing returns as more subspace dimensions are added, and converges to the baseline when privacy is loose.}
    \label{fig:noise_multiplier_mse_cifar}
\end{figure*}

\section{Qualitative Results}

In \cref{fig:samples}, we present distilled samples from the CIFAR-10, FashionMNIST, and MNIST datasets. Each row corresponds to a distinct class, with all samples generated using an IPC of 10 and a privacy budget of $(1, 10^{-5})$.

\begin{figure*}
    \centering
    \includegraphics[width=\linewidth]{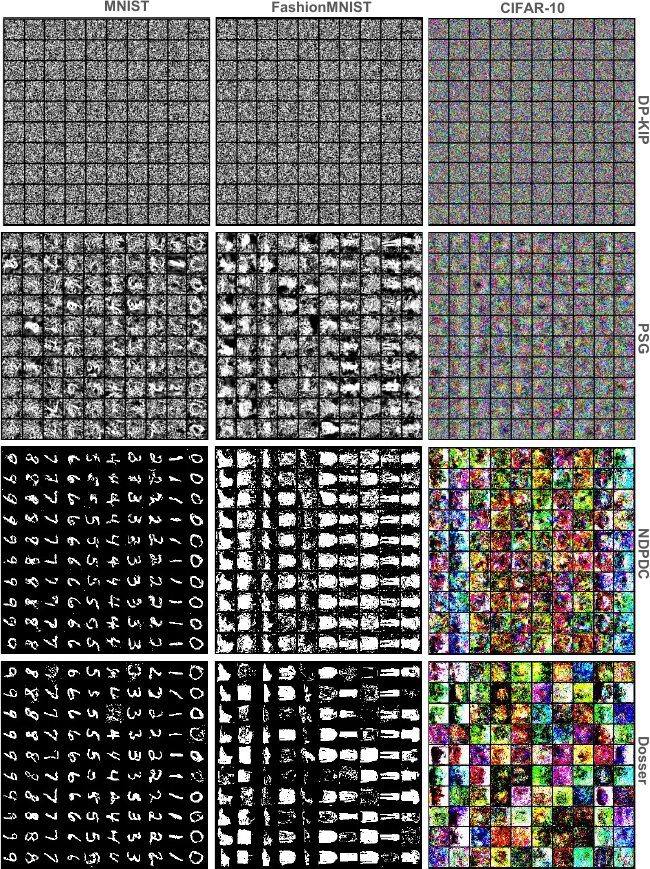}
    \caption{Distilled samples from the CIFAR-10, FashionMNIST, and MNIST datasets arranged in a 10$\times$10 grid. Each row represents a specific class, and all samples are generated with an IPC of 10 and a privacy budget of $(1, 10^{-5})$.}
    \label{fig:samples}
\end{figure*}

\section{Settings for Generating Auxiliary Datasets}
\label{sec:settings}

\subsection{Auxiliary Data Generation with Stable Diffusion (SD)~\cite{rombach2022high}}
For the CIFAR-10 dataset, we generate auxiliary images using Stable Diffusion version 1.4 (SD-v1-4). The generation process employs the following prompt for each category:
$$
\text{``A photo of a \{category\}''}.
$$
SD-v1-4 was trained on LAION-5B~\cite{schuhmann2022laion}, a dataset that contains no information related to CIFAR-10. Therefore, using it to train CIFAR-10 is not considered a privacy leakage. Representative image samples are illustrated in \cref{fig:aux_cifar10}.
\begin{figure*}[ht]
    \centering

    \begin{subfigure}[t]{\linewidth}
        \centering
        \includegraphics[width=\linewidth]{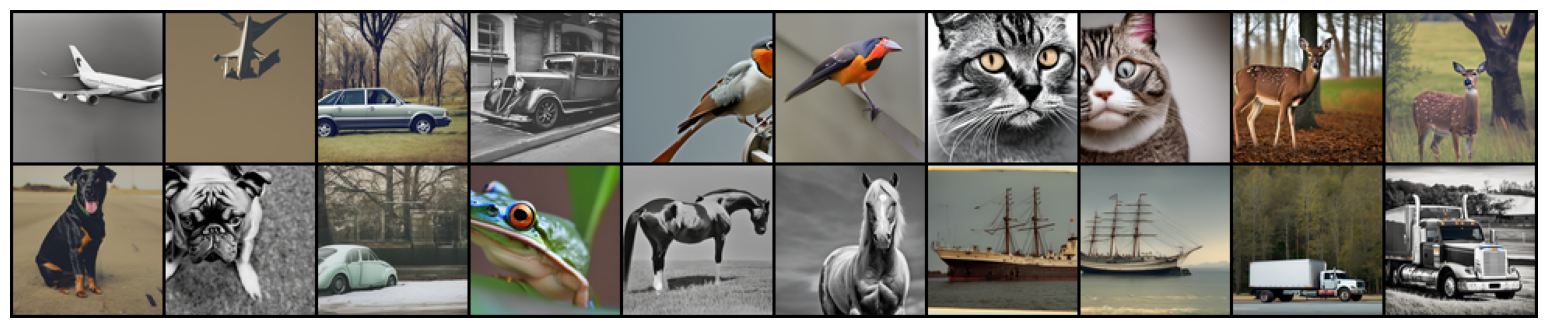}
        \caption{Sampled images from the CIFAR-10 auxiliary dataset.}
        \label{fig:aux_cifar10}
    \end{subfigure}

    \begin{subfigure}[t]{\linewidth}
        \centering
        \includegraphics[width=\linewidth]{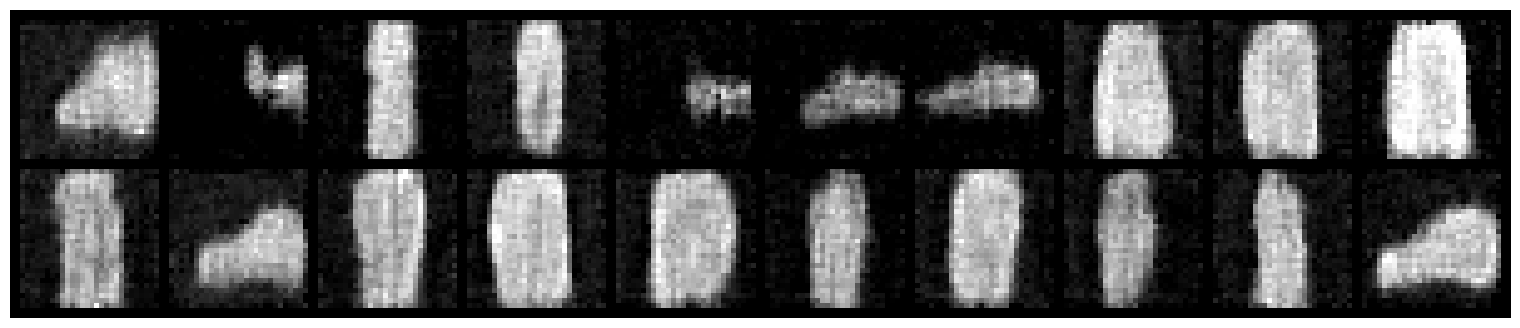}
        \caption{Sampled images from the FashionMNIST auxiliary dataset.}
        \label{fig:aux_fashionmnist}
    \end{subfigure}

    \begin{subfigure}[t]{\linewidth}
        \centering
        \includegraphics[width=\linewidth]{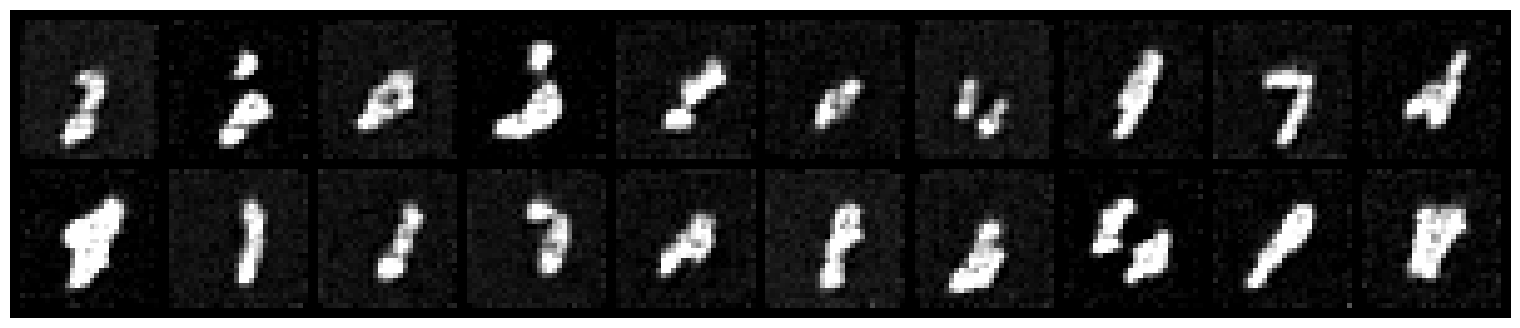}
        \caption{Sampled images from the MNIST auxiliary dataset.}
        \label{fig:aux_mnist}
    \end{subfigure}

    \caption{Sample auxiliary datasets.}
    \label{fig:combined_samples}
\end{figure*}

\subsection{Auxiliary Data Generation using Differentially Private Diffusion Model}
For MNIST and FashionMNIST we generate auxiliary images with the Differentially Private Diffusion Model (DPDM). Concretely, we train a Noise Conditional Score Network (NCSN++)\cite{song2021scorebased} for 50 epochs using Adam \cite{kingma2014adam} (no weight decay), a batch size of 64, and a learning rate of $3\times10^{-4}$. The trained network is then sampled with a deterministic DDIM sampler \cite{songdenoising} for 500 inference steps, ensuring the entire procedure conforms to the prescribed differential‑privacy budget.
We sample random images from the auxiliary dataset in \cref{fig:aux_fashionmnist} and \cref{fig:aux_mnist}.
\subsection{Other Models as Auxiliary Data Generator}
\label{sec:other_models}
Beyond SD and DPDM, we evaluate DP-Diffusion and DP-LDM as alternative generative models for producing auxiliary datasets under differential privacy constraints. In this experiment, we generate synthetic data for MNIST, FashionMNIST, and CIFAR-10 using each model while maintaining a fixed privacy budget $(1, 10^{-5})$. To assess the impact of dataset size, we vary the number of images per class (IPC) between 10 and 50. The generated datasets are then used to train downstream models, following the same evaluation protocol as in previous experiments. The results are shown in \Cref{tab:gen_model}.

\begin{table}[h]
    \centering
    \label{tab:gen_model}
    \setlength{\tabcolsep}{3pt}
    \caption{Comparison of DP-based generative models for SER.}
    \resizebox{\columnwidth}{!}{%
    \begin{tabular}{l|cc|cc|cc}
        \toprule
        & \multicolumn{2}{c|}{DPDM} & \multicolumn{2}{c|}{DP-Diffusion} & \multicolumn{2}{c}{DP-LDM} \\
        Dataset & IPC=10 & IPC=50 & IPC=10 & IPC=50 & IPC=10 & IPC=50 \\
        \midrule
        MNIST & \textbf{96.4} & 96.7 & 96.3 & 96.7 & 96.3 & \textbf{96.8} \\
        FashionMNIST & 80.1 & 83.1 & 80.5 & 83.0 & \textbf{80.8} & \textbf{83.4} \\
        CIFAR-10 & 47.8 & \textbf{51.5} & 47.5 & 51.0 & \textbf{48.2} & 51.2 \\
        \bottomrule
    \end{tabular}
    }
\end{table}

\subsection{Controlling for the Impact of Extra Information in DP-Based Generative Models}
To isolate the effect of additional information introduced by different generative models, we compare our method with DPDM, DP-Diffusion, and DP-LDM under the same privacy budget of $(1, 10^{-5})$. This comparison helps determine whether simply using these models for downstream training provides sufficient utility or if our method introduces meaningful improvements beyond what these baselines achieve. The results are shown in \Cref{tab:dp_baseline}.

\begin{table}[h]
    \centering
    \label{tab:dp_baseline}
    \setlength{\tabcolsep}{3pt}
    \caption{DP-based enerative models as baselines.}
    \resizebox{\columnwidth}{!}{%
    \begin{tabular}{l|cc|cc|cc|cc}
        \toprule
        & \multicolumn{2}{c|}{Dosser} & \multicolumn{2}{c|}{DPDM} & \multicolumn{2}{c|}{DP-Diffusion} & \multicolumn{2}{c}{DP-LDM} \\
        Dataset & IPC=10 & IPC=50 & IPC=10 & IPC=50 & IPC=10 & IPC=50 & IPC=10 & IPC=50 \\
        \midrule
        MNIST & \textbf{96.4} & \textbf{96.7} & 69.1 & 70.5 & 72.3 & 74.8 & 71.5 & 73.6 \\
        FashionMNIST & \textbf{80.1} & \textbf{83.1} & 59.7 & 63.6 & 60.2 & 65.1 & 61.1 & 64.9 \\
        CIFAR-10 & \textbf{50.6} & \textbf{52.3} & 10.0 & 9.9 & 10.0 & 10.0 & 10.0 & 10.2 \\
        \bottomrule
    \end{tabular}%
    }
\end{table}


\section{Discussion}

\subsection{Why Not DP-PCA for SER?}
\label{app:dppca}

One may argue that, instead of learning a fixed projection from auxiliary data, we could simply run DP–PCA at every iteration to discover the informative subspace on‑the‑fly. However, because the extractor is randomly re‑initialized each iteration, its output lies in a fresh feature space. Performing DP‑PCA on \emph{every} feature batch would therefore require an independent DP query each time. With a total budget $(\epsilon,\delta)$ split across $I$ iterations, each PCA call receives only $\epsilon/I$ privacy, which completely drowns the signal and devastates downstream accuracy.

\end{document}